\def\rd{{\textnormal{d}}}
\def\vw{{\bm{w}}}
\def\vx{{\bm{x}}}
\def\evb{{b}}
\def\evd{{d}}
\def\evw{{w}}
\def\evx{{x}}
\def\mA{{\bm{A}}}
\def\mB{{\bm{B}}}
\def\mC{{\bm{C}}}
\DeclareMathAlphabet{\mathsfit}{\encodingdefault}{\sfdefault}{m}{sl}
\SetMathAlphabet{\mathsfit}{bold}{\encodingdefault}{\sfdefault}{bx}{n}
\newtheorem{theorem}{Theorem}
\newtheorem{lemma}{Lemma}
\theoremstyle{thmstyleone}%
\theoremstyle{thmstyletwo}%
\theoremstyle{thmstylethree}%
\begin{document}

\title[Article Title]{Mitigating Communication Costs: The Role of Dendritic Nonlinearity}

\author*[1,2,6]{\fnm{Xundong} \sur{Wu}}\email{wuxundong@gmail.com}
\equalcont{These authors contributed equally to this work.}

\author[2,3]{\fnm{Pengfei} \sur{Zhao}}
\equalcont{These authors contributed equally to this work.}

\author[1,2]{\fnm{Zilin} \sur{Yu}}
\equalcont{These authors contributed equally to this work.}

\author[2,5]{\fnm{Lei} \sur{Ma}}
\equalcont{These authors contributed equally to this work.}

\author[1]{\fnm{Yifan} \sur{Gao}}
\author[1]{\fnm{Ka-Wa} \sur{Yip}}

\author[6]{\fnm{Huajin} \sur{Tang}}

\author[6]{\fnm{Gang} \sur{Pan}}

\author[7]{\fnm{Poirazi} \sur{Panayiota}}

\author[2,4]{\fnm{Tiejun} \sur{Huang}}

\affil*[1]{\orgname{Zhejiang Lab, China},  \city{Hangzhou}, \state{Zhejiang}, \country{China}}

\affil[2]{\orgname{Beijing Academy of Artificial Intelligence}, \orgaddress{\city{Beijing}, \country{China}}}

\affil[3]{\orgname{Bytedance}, \orgaddress{\city{Beijing}, \country{China}}}

\affil[4]{\orgname{National Key Laboratory for Multimedia Information Processing, School of Computer Science, Peking University}, \orgaddress{\city{Beijing}, \country{China}}}

\affil[5]{\orgname{National Biomedical Imaging Center, Peking University}, \orgaddress{\city{Beijing}, \country{China}}}

\affil[6]{\orgname{Zhejiang University}, \orgaddress{\city{Hangzhou}, \country{China}}}
\affil[7]{\orgname{IMBB-FORTH}, \orgaddress{\city{Heraklion}, \state{Crete}, \country{Greece}}}

\abstract{Our understanding of biological neuronal networks has profoundly influenced the development of artificial neural networks (ANNs). However, neurons utilized in ANNs differ considerably from their biological counterparts, primarily due to the absence of complex dendritic trees with local nonlinearities. Early studies have suggested that dendritic nonlinearities could substantially improve the learning capabilities of neural network models. In this study, we systematically examined the role of nonlinear dendrites within neural networks. Utilizing machine-learning methodologies, we assessed how dendritic nonlinearities influence neural network performance. Our findings demonstrate that dendritic nonlinearities do not substantially affect learning capacity; rather, their primary benefit lies in enabling network capacity expansion while minimizing communication costs through effective localized feature aggregation. This research provides critical insights with significant implications for designing future neural network accelerators aimed at reducing communication overhead during neural network training and inference.
}

\keywords{Dendrite, Neural network, Machine learning, communication cost}

\maketitle

Over the past decade, artificial neural networks (ANNs) have significantly advanced across diverse domains, demonstrating impressive performance on complex tasks~\cite{silver_mastering_2017, openai2023gpt4}. Despite their inspiration from biological neuronal networks, modern ANNs use highly simplified "point neurons," described mathematically as: 
\begin{equation} h=\sigma\left(\sum_{i=1}^{n}{\evw_i\evx_i+b}\right), \label{equ:neuron} \end{equation} 
where inputs ($\evx_i$), weights ($\evw_i$), bias ($b$), and nonlinear activation ($\sigma$) produce neuron output ($h$). These neurons differ drastically from biological neurons, which have elaborate dendritic structures (Fig.~\ref{fig:neuron}).

\begin{figure}
    \centering
    \includegraphics[width=0.9\linewidth]{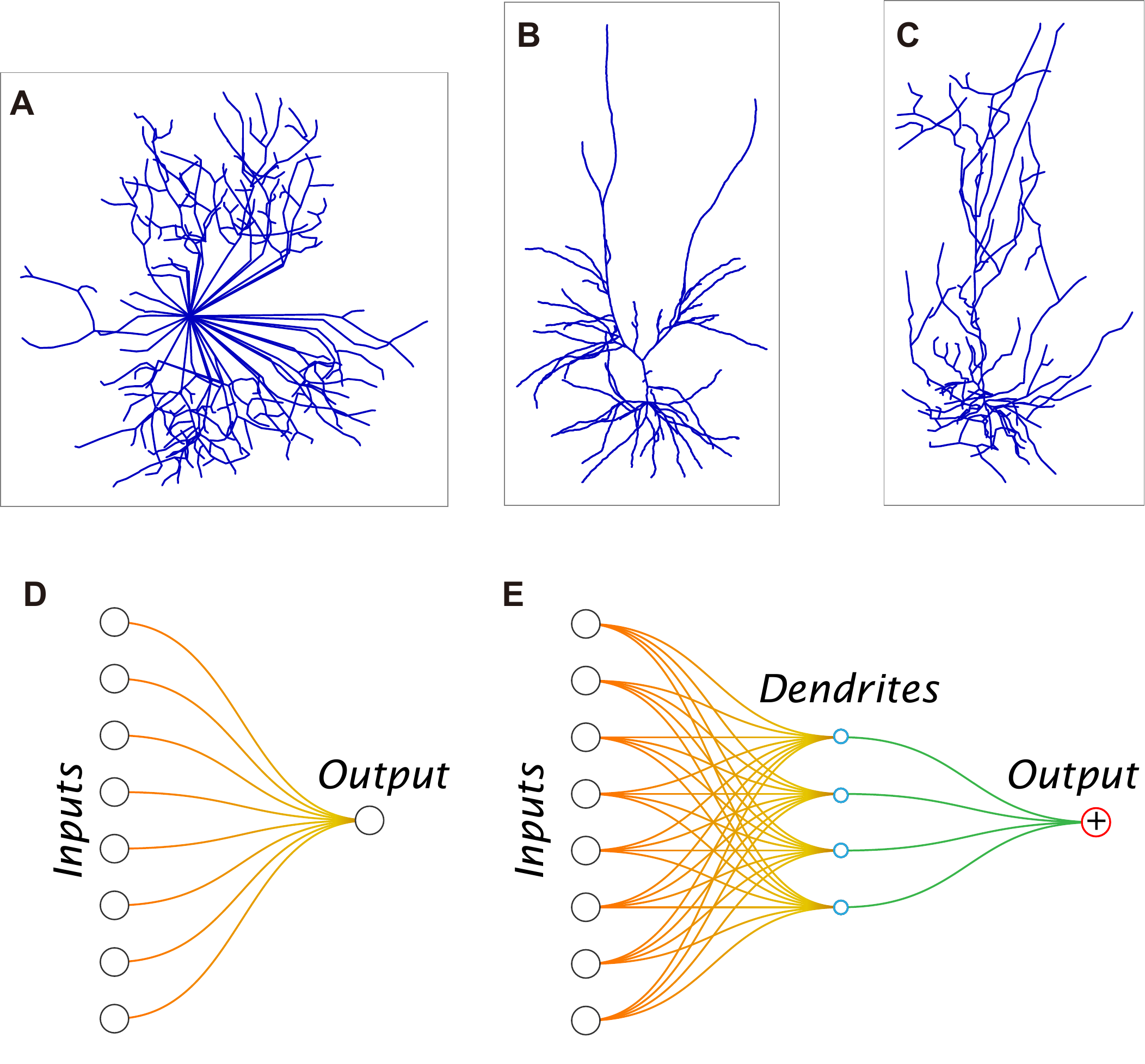}
    \caption[]{(\textbf{A, B, C}) Illustration of three representative neurons showcasing distinct dendritic structures from left to right: A chicken bipolar neuron~\cite{wang2012vivo}, a human hippocampal pyramidal neuron~\cite{benavides2020differential}, and a ferret neocortical pyramidal neuron~\cite{adusei2021morphological}. All neuronal morphologies are from the Neuromorpho.org database~\cite{ascoli2007neuromorpho}. (\textbf{D}) Portrays a point neuron, as characterized by Equation~\ref{equ:neuron}. (\textbf{E}) Illustrates a dendritic neuron with $4$ dendritic branches as detailed by Equations ~\ref{equ:den} and ~\ref{equ:dendrite}.}
    \label{fig:neuron}
\end{figure}

Dendritic structures in biological neurons offer enhanced surface-area-to-volume ratios, essential for forming numerous synaptic connections within limited brain space~\cite{stuart2016dendrites,chklovskii2000optimal,chklovskii2004synaptic}. Unlike biological brains, ANNs executed on general-purpose hardware (CPUs, GPUs) do not have physical constraints as in biological brains, raising questions about the necessity of dendrites in artificial systems. Here, we argue that dendrites are indeed valuable beyond biological contexts.

Dendrites are known to facilitate local nonlinear operations due to their anatomical compartmentalization and specialized voltage-gated ion channels~\cite{poirazi_pyramidal_2003, stuart2016dendrites,magee2000dendritic,major2013active}. These localized nonlinearities may enable key computational functions like coincidence detection, signal amplification, learning, and temporal discrimination~\cite{poirazi_impact_2001,jones2021might,richards2019dendritic}. Most importantly, it has long been believed that dendritic nonlinearities can endow neurons with greater model capacity than point neuron-based models~\cite{wu2018improved,poirazi_impact_2001}. 

However, our findings challenge this assumption. Approaching the question from a machine learning perspective, we demonstrate that adopting neurons with active dendrites has little effect on model learning capacity. Instead, we show that adopting active dendrite can significantly reduce communication costs in artificial neural network (ANN) models. Specifically, dendritic architectures enable localized processing that reduces the number of transmitted features without degrading performance. This is particularly important because communication overhead—primarily from data movement—dominates energy consumption in ANNs~\cite{dally_model_2022}, echoing biological evidence that highlights the high cost of axonal transmission relative to local computation~\cite{levy2021communication}.

Our findings indicate that the dendritic neuron-based model' expanded learning capacity~\cite{poirazi_impact_2001,wu2018improved} likely arises primarily from the sparse structure employed in their models and redundancy avoidance attributed to the smaller unit size of dendrites. Thus, we posit primary advantage of active dendrite lies in mitigating communication costs through effective local feature aggregation.

Our investigation also finds that adopting a dendritic structure can significantly reduce memory access and occupancy during inference and training of ANN models. These results have important implications for the development of efficient ANN architectures and hardware for real-world applications.
\section{Results}\label{results}

In this study, we use a simplified dendritic neuron model~\cite{poirazi_pyramidal_2003,polsky2004computational}, depicted in Fig.\ref{fig:neuron}E and mathematically described by Eqs.\ref{equ:den} and~\ref{equ:dendrite}. In this architecture, incoming signals at each dendrite are integrated by computing the dendritic output $d_j$, which is obtained from the weight vector $\vw_j$, input activation $\vx$, and an optional bias $b_j$. The dendritic output is transformed by an element-wise nonlinear function $\sigma$ and subsequently summed to produce the somatic output $h$, conveyed to downstream recipients:

\begin{align}
    \hat{\evd}_j&={{\vw}_{j}^\top \vx+\evb_j}, \,\evd_j=\sigma(\hat{\evd}_j),\label{equ:den}\\
    h&=\sum_{j=1}^{K}{\evd_j} \,.
    \label{equ:dendrite}
\end{align}

Each dendritic unit described here has the same information-processing capacity as a point neuron, but differs in how its output is conveyed downstream. Unlike point neurons, whose outputs are independently transmitted to downstream neurons, dendrites share a common channel for output transmission, typically resulting in information loss. This property is formally detailed in Theorem~\ref{thm:1} (Appendix).

\subsection{Communication vs computing in neural networks}
In neural networks, the energy required for computation is substantial, but communication is the primary energy bottleneck in modern hardware~\cite{dally_model_2022}. Communication can incur orders of magnitude higher costs than computation; for instance, as reviewed by Dally et al.~\cite{dally_model_2022}, adding two 32-bit numbers may consume approximately 20 femtojoules (fJ), while fetching these numbers from memory can require about 1.3 nanojoules (nJ)—roughly 64,000 times more energy.

Biological brains also exhibit significant energy expenditure and structural investment related to communication, as indicated by extensive white matter volume~\cite{mota2019white} and high metabolic demands~\cite{levy2021communication,attwell2001energy,yu2018evaluating}. The evolutionary pressure on biological systems to minimize these costs suggests potential insights for artificial systems. Our study proposes that incorporating active dendrites into artificial neural networks can significantly mitigate communication-related energy expenses.

\subsection{Evaluating the communication efficiency of the dendritic structure}

\subsubsection{Developing the dendritic neuron model}

To investigate the role active dendrites can play in neural networks, we compare performance of models that are constituted with point neurons or dendritic neurons respectively. We substitute the point neurons in conventional neural network models with dendritic neurons. Each nonlinear summation unit—point neuron or active dendrite—receives at most one copy of a specific input from the previous network layer. Each dendrite within a neuron receives an equal number of dense connections; hence, a dendritic neuron with $K$ branches receives $K$ times more inputs than its point-neuron counterpart. Clearly those two models are of very different computing and parametric complexity. We need to compare models on an equal footing.

\begin{figure*}[t!]
\centering
\includegraphics[width=0.75\linewidth]{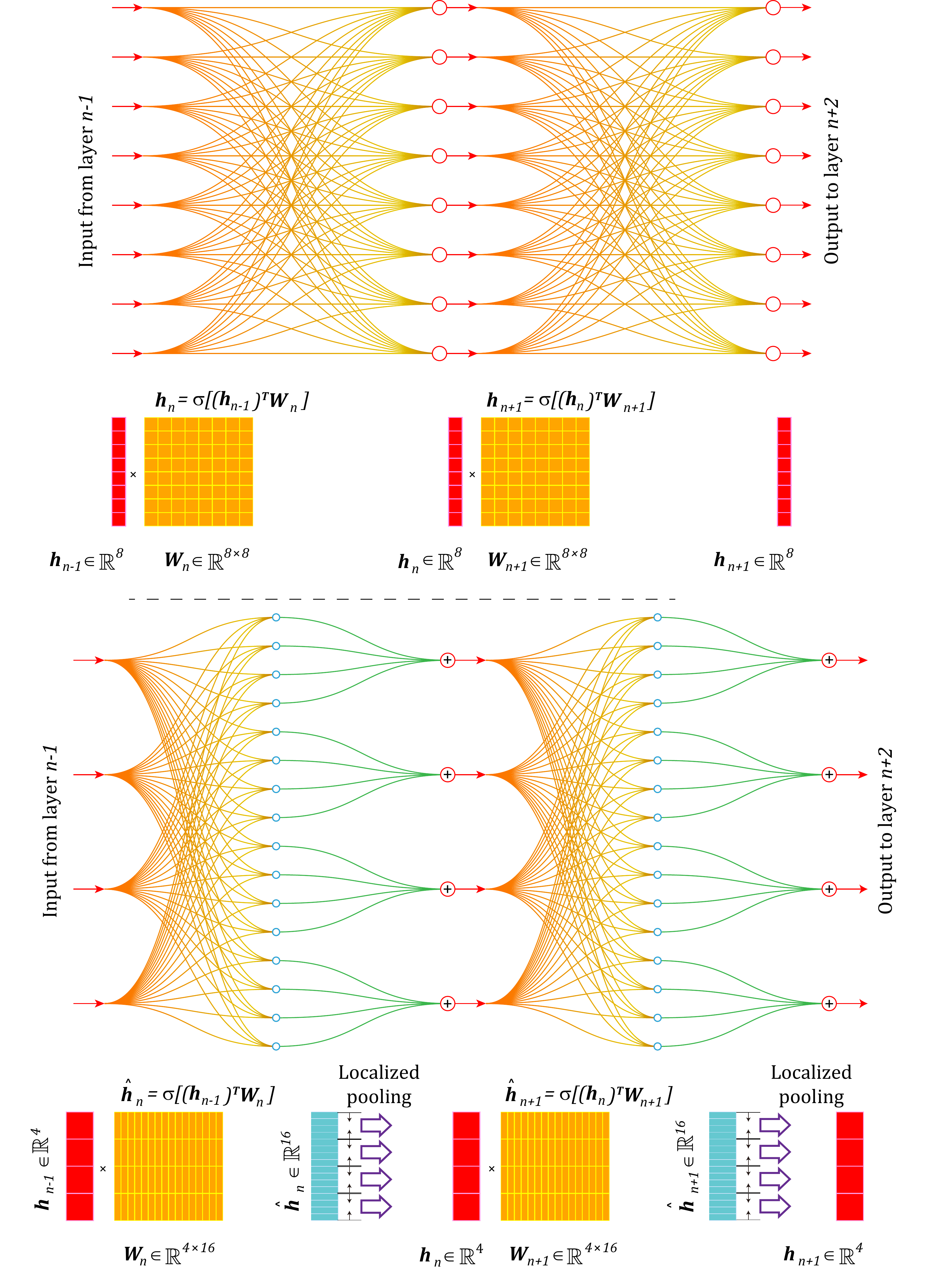}
\caption{Comparison of neural network layers using point neurons (\textbf{Top}) and dendritic neurons (\textbf{Bottom}). Only two layers from each model are depicted. The point neuron model has \(D=8\) channels, whereas the dendritic neuron model features neurons with $K=4$ dendritic branches each, leading to an effective \(\hat{D} = \frac{D}{\sqrt{4}} = 4\) channels. This ensures that both models have comparable parametric and computational complexities. Note: Tensor dimensions are symbolized by a mesh of patches; however, patch sizes do not reflect actual scale. Bias terms have been excluded for simplicity.}\label{fig:network}
\end{figure*}

Consider the example in Fig.~\ref{fig:network}: a fully connected layer with $D=8$ inputs and outputs (top) has a complexity of $D^2 = 64$. In contrast, a dendritic neuron layer (bottom) with $\hat{D}$ inputs/outputs neurons and $K=4$ dendrites per neuron has parametric complexity $K\hat{D}^2$. To equate this with the point-neuron layer, we set $\hat{D} = D/\sqrt{K}$, yielding $\hat{D} = 4$. (Since parametric and computational complexities always scale in the same way in this study, we will refer only to parametric complexity from here on.)

To compare communication costs, let $D$ be the total number of neurons in a layer or network, and define $\Psi$ as the ratio of $D$ relative to a point-neuron baseline. In the example, the point-neuron layer has $D = 8$, while the dendritic layer has $D = 4$, resulting in $\Psi = 0.5$.

\subsubsection{Dense models on ImageNet}

\begin{figure}[!ht]
    \centering
    \includegraphics[width=0.85\linewidth]{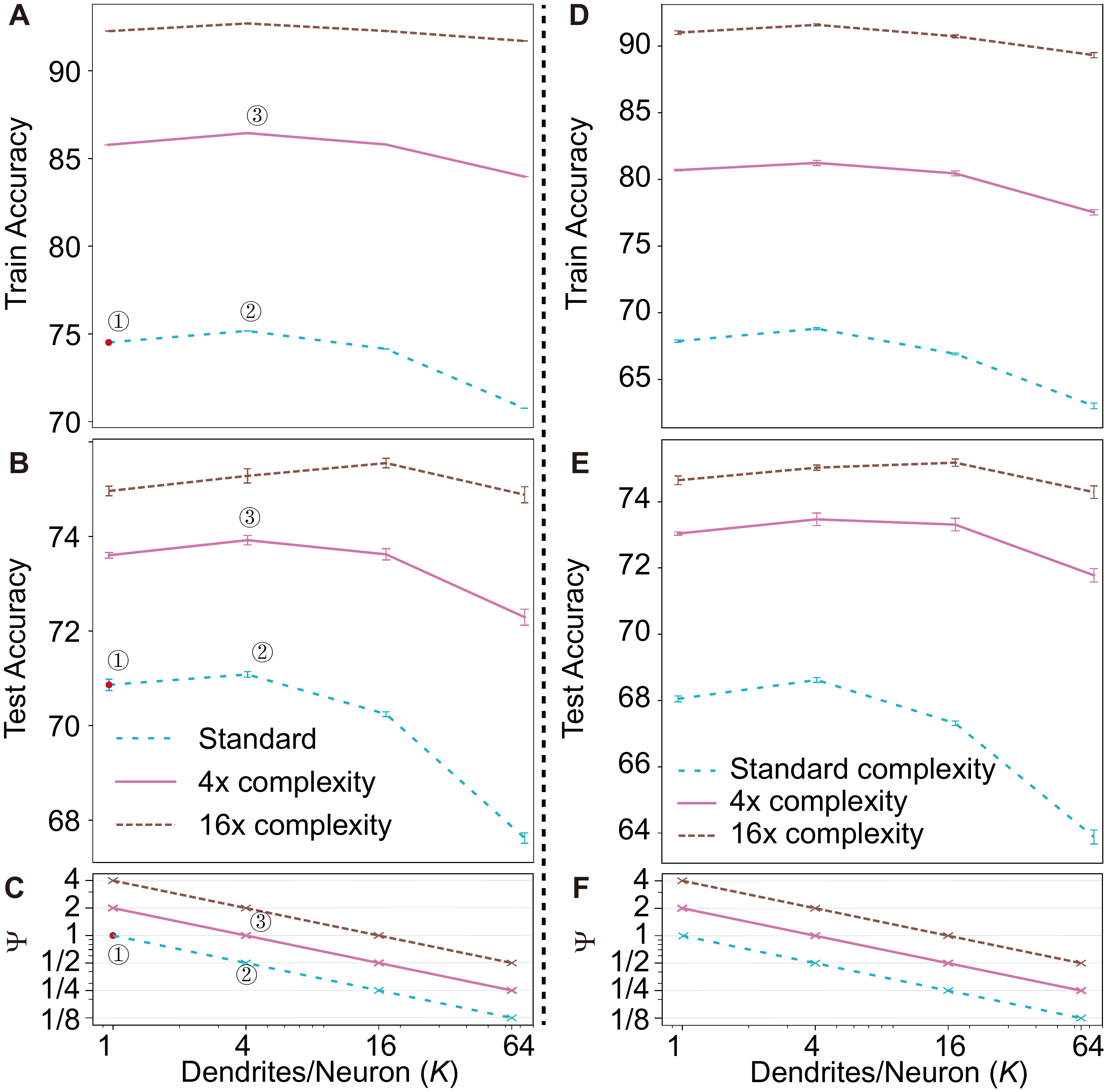}
    \centering
    \caption{Comparison of ResNet-18-style models using point vs. dendritic neurons on ImageNet. \textbf{Left (A-C):}  Experiment on dense models (5 trials, std dev shown). The red dot (\raisebox{.5pt}{\textcircled{\raisebox{-.9pt} {1}}}) marks the baseline ResNet-18 with standard point neurons (K=1). The x-axis indicates the number of dendrites per neuron (K).
    Three complexity levels are evaluated:
    Standard complexity (light blue dashed curves), models with the same complexity as baseline ResNet-18.
    4x complexity (solid magenta curves). 16x complexity (brown dashed curves). Within each curve, models share the same total parametric budget but differ in $K$ (number of dendrites per neuron). For example, model (\raisebox{.5pt}{\textcircled{\raisebox{-.9pt} {2}}}) is configured with a $\Psi$ ratio of 0.5 to match the complexity of the baseline. Model (\raisebox{.5pt}{\textcircled{\raisebox{-.9pt} {3}}}), with $\Psi=1$ and $K=4$, has $4\times$ the complexity of the baseline model (\raisebox{.5pt}{\textcircled{\raisebox{-.9pt} {1}}}). \textbf{(A)} Training set accuracy. \textbf{(B)} Test set accuracy. \textbf{(C)} $\Psi$ ratio relative to the baseline ResNet-18.   
    \textbf{Right (D-F):} Same layout and analysis, but for sparse models (3 trials; standard deviation shown).} \label{fig:imgnet}
\end{figure}

We begin by employing the Resnet-18 network~\cite{he2016deep} as a baseline point neuron-based model, a widely utilized computer vision model. For this set of experiments we modify the Resnet-18 network architecture as described above to replace typical point neurons with dendritic neurons. Since the dimensionality of our network's input and output remains fixed, the input and output layers are addressed differently, as detailed in ~\nameref{method:arch} section. 

The outcomes of this experiment are shown in Fig.~\ref{fig:imgnet}. We compare models with three levels of complexity. The light-blue dashed curves represent experimental results obtained from various models with a complexity level equal to that of the standard ResNet-18 model. The leftmost data point (\raisebox{.5pt}{\textcircled{\raisebox{-.9pt} {1}}}) corresponds to the standard ResNet-18 model, which serves as a baseline. Subsequent data points to the right denote dendritic models with $K$ values of 4, 16, and 64, respectively. Concomitantly, these models' $D$ values have been adjusted to 1/2, 1/4, and 1/8 of the original model's values, respectively. Given the models we study here are convolutional neural networks, we change $D$ by scale up/down number of channels in network layers. By maintaining this configuration, four models on the same curve have $\Psi$ of $1, 0.5, 0.25, 0.125$ from left to the right, as shown in panel C, all while preserving equivalent parametric and computational complexities (see Appendix~\ref{app:complexity} for a detailed complexity comparison between models).

The solid magenta curves represent data from models where $D$ is doubled compared to the experiments from the light-blue curve. Similarly, the brown dashed curves illustrate models where $D$ are quadrupled. For the brown dashed curve, the point neuron based model at the left end of the curve has $4$ times the number of neurons ($\Psi=4$ in panel C) for each layer as compared to the standard Resnet-18 model (\raisebox{.5pt}{\textcircled{\raisebox{-.9pt} {1}}}). At the right end of the brown curve, we can see the dendritic model, which is equipped with $K=64$ dendrites per neuron, thus having just one eighth of neurons ($\Psi=0.125$) 

Our analysis yields a particularly intriguing result concerning the communication cost of dendritic neuron models compared to point neuron-based models. Specifically, we find that for models of equivalent computing complexity, a dendritic neuron model achieves comparable performance to a point-neuron-based model when $\Psi$ is greater than or equal to $0.25$. 

This reduction in $D$ offered by adding dendrites can significantly reduce the communication cost between neurons in artificial neural networks. 

To obtain a more complete picture, we also compare models with the same number of neurons. The results are illustrated in Fig.~\ref{fig:AK} (Appendix).

\subsubsection{Sparse models on ImageNet}
Thus far we developed dendritic neuron-based models with significantly reduced communication costs, as measured by $D$. These dendritic neuron-based models can also achieve similar (or slightly better) performance compared to corresponding point-neuron-based models of the same computing complexity, in terms of both model expressivity and generalization performance. 

This may seem contradictory to earlier works~\cite{poirazi_impact_2001,wu2018improved}, where dendritic neuron-based models showed higher capacity/expressivity than point-neuron-based models of the same parametric complexity. One might argue that the models we evaluated thus far are non-sparse, which is not biological and differs from the models evaluated in earlier studies. As such, it is essential to also investigate the influence of sparsity on model behavior.

As illustrated in right side panel of Fig.~\ref{fig:imgnet}, we study sparse models with $85\%$ of parameters pruned. Although sparsity generally reduces performance, we observe the same trend as in the non-sparse case: models with different dendritic numbers $K$ but equal computing complexity show little performance difference between dendritic and point-neuron architectures as long as $\Psi$ is above $0.25$. This reinforces our earlier findings and highlights communication efficiency as the key advantage of dendritic neuron models.

\subsubsection{Additional Empirical Verification}
To further substantiate our findings, we conducted additional experiments using a diverse array of model architectures and datasets. This analysis included an assortment of models encompassing those lacking residual connections, as well as those that leverage transformer-based architectures. For the sake of clarity, we have included these additional results in the Appendix~\ref{app:supp_ML}. Similar conclusions are draw from these supplementary experiments.

\subsection{Local communication cost analysis}\label{comm}

Our analysis demonstrates that incorporating dendritic structures into neural networks significantly reduces the required neuron count ($D$) without sacrificing performance, provided the $D$ is sufficiently large. In biological brains, fewer neurons correspond to reduced volume and connectivity, potentially decreasing both neuronal soma volume and the white matter, which consists predominantly of long-range axons, and constitutes approximately $60\%$ of the brain's total volume~\cite{braitenberg2013cortex}. For ANNs running on hardware such as GPUs, reducing $D$ cuts costs by limiting data transfers to off-chip memory and decreasing memory usage for hidden layer activations during inference. We define this neuron-count-related cost $D$ as the \textit{inter-layer communication cost}.

Although dendritic architectures lower $D$, Fig.~\ref{fig:network} shows that dendritic neurons require more synaptic connections per input neuron to match point-neuron model complexity/performance. Thus, communication costs must also account for connecting these additional synapses (weights).

Fig.~\ref{fig:cost_illu} provides a breakdown of communication costs in both biological networks and ANNs, dividing costs into three parts: intra-neuron aggregation cost ($C_A$), inter-layer communication cost ($D$), and intra-layer signal propagation cost ($C_E$). Costs $C_A$ and $C_E$ are measured by the signal's travel distance, while $D$ reflects neuron count.

The top panel of the figure depicts the communication costs in biological neural networks. Here, \(C_A\) corresponds to the cost of aggregating outputs from the dendritic tree and sending them to the cell body. Due to the challenge of separating \(C_A\) from computation costs, this is not intended to be accurately defined. \(D\) represents the long-range communication cost, while \(C_E\) denotes the cost of axonal activation reaching the target synapses.

The bottom panel illustrates the same metrics for ANNs, assuming inference is performed on a mesh of processing elements (PEs). The left section shows aggregation costs ($C_A$), the middle represents inter-layer costs ($D$, including memory storage considerations), and the right indicates intra-layer costs ($C_E$), measured by path length between PEs.

\begin{figure}[!ht]
    \centering
    \includegraphics[width=0.99\linewidth]{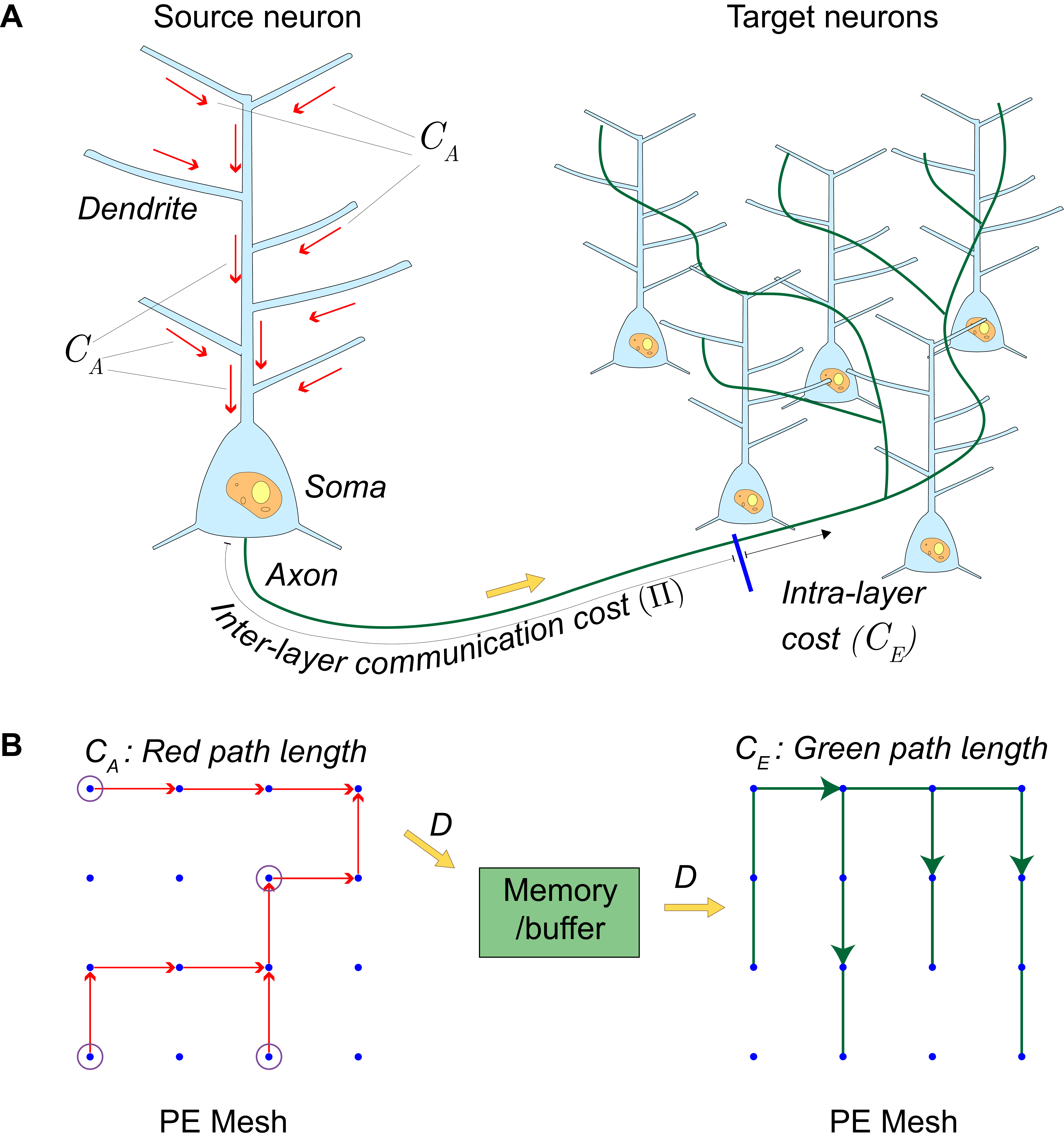}
    \caption[]{Illustration of the three communication cost metrics ($C_A$, $D$, and $C_E$) for (A) Biological Neural Networks and (B) ANNs. In this context, \(C_A\) represents the communication cost associated with aggregating synaptic inputs; \(D\) denotes the inter-layer communication cost, and \(C_E\) signifies the expense related to signal propagation to each synapse (weight). We measure \(C_A\) and \(C_E\) by the total path length over which the signal traverses. For the ANN models, we assume model inference is performed on a mesh of processing elements (PEs). Each blue dot represent one PE unit in the mesh. It's important to note that the division into these three metrics is not intended to be exact.}
    \label{fig:cost_illu}
\end{figure}

Returning to Fig.~\ref{fig:network}, we emphasize the importance of considering communication costs beyond inter-layer interactions. The point-neuron model receives and outputs data of dimension $D$. In contrast, the dendritic neuron model preserves complexity by using fewer neurons ($\hat{D}=D/\sqrt{K}$) but distributing inputs across more dendrites ($D\sqrt{K}$). In this case $D$ and $\hat{D}$ corresponds to the number of neurons for the network layer. Although both model architectures process the same total number of inputs ($D^2$), differences in neuron and dendrite configurations significantly influence communication costs—captured by $C_A$, $C_E$, and $D$—as detailed in the following analysis.

\subsubsection{Cost estimation for a biological neuronal network}

We quantified the wiring (communication) costs, \( C_E \), required to connect $\hat{D}=D/\sqrt{K}$ input neurons to \( D \cdot \sqrt{K} \) synapses each, resulting in a total of \( D^2 \) synapses in biological neuronal networks. These costs were evaluated across various dendritic counts per neuron, \( K = \{1,4,16,64\} \), assuming synapses are spatially distributed either in a two-dimensional (2D) plane or a three-dimensional (3D) volume. Both empirical measurements and fitting with theoretical predictions are presented in Fig.~\ref{fig:cost_comm}A (2D case) and Fig.~\ref{fig:cost_comm}B (3D case). The special case \( K = 1 \) corresponds to the point-neuron model. Clearly, the results demonstrate a significant advantage in adopting dendritic neurons. Further methodological details are provided in Section~\ref{method:comm:bio}.

We do not attempt to estimate the impact of having dendritic neurons on $C_A$ for biological neuronal networks due to the scarcity of biological data and difficulty in separating the computing cost from the aggregating communication cost. However, we speculate that the cost of signal aggregation in biological neurons will be mostly dependent on the number of inputs, and thus adopting a nonlinear or linear dendrite would not significantly affect this type of cost.

\begin{figure}[!ht]
    \centering
    \includegraphics[width=0.99\linewidth]{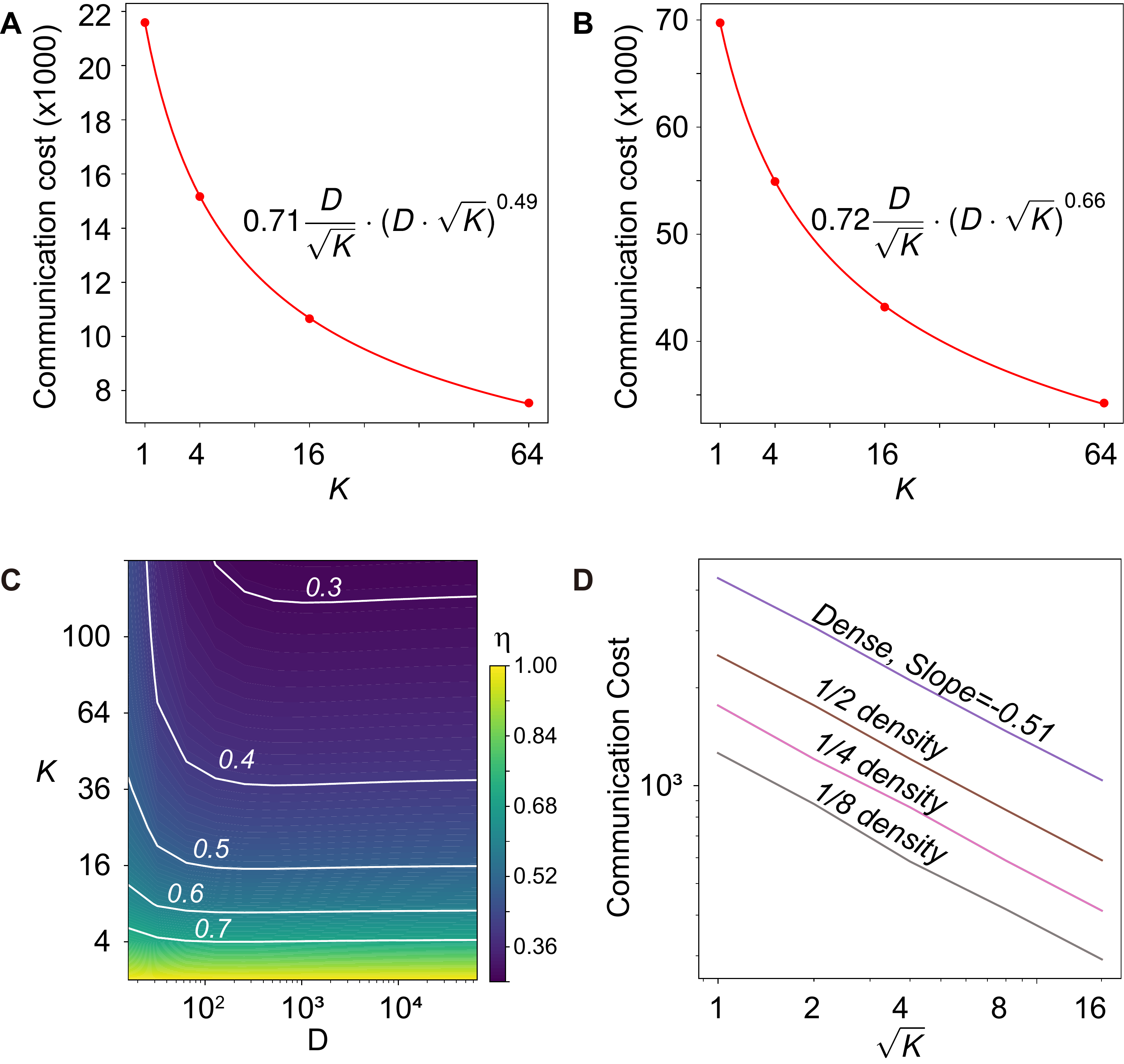}
    \caption[]{(A, B) Estimation of signal propagation costs $C_E$ for a biological network layer with a varying number of dendrites per neuron ($K$) and a baseline network of dimension $D=1024$. Post-synaptic targets sampled from (A) a unit square. (B) from a unit cube. In each panel, the curve and its corresponding equation are fitted to the data points.
    (C,D) Estimation of signal propagation costs for a ANN layer. (C) Topographic representation of the ratio $\eta=({\hat{C}_A+\hat{C}_E})/({C_A+C_E})$: The visualization highlights the influence of the variations in $D$ and $K$ on the $\eta$. (D) demonstrates the variations in \(\hat{C}_E\) as a function of \(\sqrt{K}\), and levels of connection sparsity. The axes are depicted on a logarithmic scale. When \(K=1\), the models are based on point neurons. For this experiment, a \(D\) value of 256 was utilized. The slope is obtained from fitting a line to the logarithm of \(C_E\) against the logarithm of \(\sqrt{K}\).
}
    \label{fig:cost_comm}
\end{figure}

\subsubsection{Cost estimation for an artificial neural network}

We analyzed the communication cost of artificial neural networks (ANNs) using a simplified parallel architecture model (See Appendix~\ref{app:comm_complexity} for details of analysis). Our results show that incorporating dendritic neuron structures into ANN models can significantly reduce on-chip communication cost compared to traditional point neuron models. Specifically, for fixed computational complexity, dendritic models consistently exhibit lower communication costs as the number of dendrites per neuron \(K\) increases. 

Figure~\ref{fig:cost_comm}C shows that the ratio of communication costs \(\eta\) between dendritic and point neuron-based models decreases with increasing \(K\). Moreover, as shown in Appendix~\ref{app:comm_complexity}, in most configurations, the communication cost for dendritic models is dominated by \(\hat{C}_E\), especially for large input dimensions \(D\), which is common in practice.

Further analysis reveals that \(\hat{C}_E\) decreases with increasing model sparsity and increasing \(K\), following a negative power-law relationship with \(\sqrt{K}\), specifically \(\hat{C}_E \propto K^{-0.51}\) as shown in Figure~\ref{fig:cost_comm}D. This closely matches the simplified theoretical form of \(\hat{C}_E\) derived in the Appendix (Eq.~\ref{Eq: C_E_dendrite}), which includes a \(K^{1/4}\) scaling factor. These findings suggest that dendritic neurons can effectively reduce communication costs in sparse ANNs.

\subsection{Reducing Memory Access Cost During Training and Inference on GPU}
\subsubsection{Model inference}
We also extended our analysis to modern GPU-based architectures. Theoretical estimates indicate that dendritic neurons can reduce global memory access and improve efficiency by a factor of \(\sqrt{K}\), highlighting their potential for improving performance in realistic parallel hardware settings. We also verify the theoretical result with empirical experiments. Further details are provided in Appendix~\ref{app:gpu}.

\subsubsection{Model training}
We have demonstrated that dendritic architectures can reduce communication costs during model inference. Can they similarly decrease memory communication costs during training?

At first glance, dendritic models require storing more intermediate activations during training compared to standard models. For example, in Fig.~\ref{fig:network}, a dendritic neuron-based layer stores $16+4=20$
intermediate activations, while a point neuron-based layer only stores 8 activations. Counting each activation value once (as obtaining post-activation values from pre- incurs minimal cost), this translates to 320 bits (dendritic neuron) versus 128 bits (point neuron) per layer using 16-bit floats.

However, memory costs can be significantly reduced by leveraging gradient properties of commonly used activation functions (ReLU, Leaky ReLU, GELU, etc.). Naively, backpropagation through dendritic neurons requires storing intermediate activations $h$ and dendritic pre-activations $\hat{d}_j$ (see Equations~\ref{equ:den} and~\ref{equ:dendrite}). Examining the gradient computations for a dendritic network layer with input $\mathbf{x}$, dendritic pre-activation $\hat{d}$, and neuron-layer output $h$:

\begin{align} \frac{\partial h}{\partial \mathbf{w}_j} = \sigma'(\hat{d}_j)\cdot\mathbf{x},
 \
\frac{\partial h}{\partial \mathbf{x}} = \sum_{j=1}^{K}\sigma'(\hat{d}_j)\cdot\mathbf{w}_j
 \end{align}

we observe gradients calculation here does not depend on $\hat{d}$ or $d$ but $\sigma'(\hat{d}_j)$. 
For ReLU, this derivative is binary (0 or 1), allowing representation with a single bit per dendrite. Since only $h$ (not $\hat{d}_j$) is needed for gradient computations of subsequent layer (in forward direction). It is suffice to store just one bit per dendrite. In the example of Fig.~\ref{fig:network}, this reduces memory from 320 bits (16-bit floats) to only $16 + 4 \times 16 = 80$ bits, even less than the point neuron-based model (128 bits). Similar reductions are achievable for related piecewise activation functions (e.g., Leaky ReLU, Parametric ReLU, and ReLU6).

Non-piecewise activation functions (e.g., GELU, ELU, SELU) may require slightly higher precision. However, their gradient values typically span limited ranges, possibly enabling efficient storage using only a few bits per dendrite (e.g., two bits). Given the robustness of neural network training to gradient noise~\cite{chakrabarti2019backprop}, this approximation is likely acceptable in practice. A detailed investigation is left for future work.

\section{Discussion}\label{discuss}

This work was inspired by biological neurons, which aggregate signals from dendrites into a single output at the soma. Dendrites integrate inputs nonlinearly through voltage-gated channels and receptors, we designed neural network units that mimic these characteristics. 

Treating dendrites as individual point-neurons (as previously proposed~\cite{poirazi_pyramidal_2003}), the idea of aggregating neuronal outputs is common in deep learning, such as spatial pooling in convolutional networks~\cite{lecun1998gradient} and Maxout networks~\cite{goodfellow_maxout_2013}, to enhance performance and reduce computational complexity. Prior studies employing dendrite-inspired pooling strategies reported superior computational and discriminatory capabilities compared to linear integration methods~\cite{wu2018improved, poirazi_impact_2001,chavlis2025dendrites}.

Several related studies include Naud’s sparse neuron ensembles~\cite{naud2018sparse}, Sezener's Dendritic Gated Network (DGN)\cite{sezener2021rapid}, and Iyer et al.’s incorporation of active dendrites into ANNs\cite{iyer_2022}. Naud showed neuron ensembles efficiently communicate combined signals from multiple sources, albeit through a different mechanism. Sezener emphasized performance without exploring how dendrite count affects efficacy. Unlike Sezener, our study evaluates dendritic efficiency, communication costs, and complexity. Iyer et al. focused on shallow ANNs and dendrites' role in continual learning, contrasting with our emphasis on dendritic efficiency in deeper networks.

We demonstrate dendritic neurons substantially improve communication efficiency as network size scales. Typically, increasing network width scales neuron counts ($D$) with the square root of parameters, while depth requires linear scaling. Our results demonstrate dendritic architectures can significantly increase parametric complexity without increasing $D$. This substantially reduces inter-layer communication, lowering data-transfer costs within computing chips, and may have analogous biological benefits~\cite{levy2021communication}, although further exploration of biological wiring costs remains necessary.

Our analysis of dendritic architecture’s communication advantages considered only wire length, excluding wiring volume. Earlier research suggests dendrites confer significant volume savings~\cite{chklovskii2004synaptic}. Accounting for wiring volume would likely enhance our architecture’s benefits but, due to limited biological data, this aspect is left for future research.

Also relevant to our study is the concept of small-world networks, which achieve communication efficiency through specific connectivity patterns~\cite{watts1998collective,latora2001efficient}. We instead focus on local dendritic nonlinearities to minimize communication costs.

Our findings carry theoretical and practical implications. Theoretically, dendritic architectures suggest widening networks by enhancing feature complexity rather than solely increasing inter-layer communication. Practically, dendritic models outperform point-neuron models at equal communication budgets, substantially reducing memory access, especially beneficial for large-batch inference. Our results predict dendritic designs can reduce on-chip communication costs, potentially informing neural accelerator design.

Our analysis on reduced memory costs from dendritic neural network training is limited to ANNs. Whether this benefit translates to biological dendritic neurons remains open, given limited understanding of biological learning mechanisms, and is deferred to future research.


We also lack complete understanding of why dendritic channel-sharing matches or exceeds conventional model performance. One plausible explanation involves interpreting dendritic pooling as low-rank approximations of large weight matrices. Further investigation is necessary.


Notably, our dendritic models apply a single nonlinear layer solely at dendrites. Preliminary results indicated minor performance gains when adding additional somatic nonlinearities, particularly with large number of dendrites. However, this is not included here, as the primary study focus is efficiency rather than incremental performance improvements. Exploring diverse nonlinearities and advanced architectures in dendritic neurons remains intriguing future work.

This study's analysis of active dendrites relied on machine learning experiments employing a rate-based model. It is crucial to acknowledge that this methodological choice may introduce limitations, particularly when comparing the findings to those derived from spike-based models.

Finally, our findings parallel evolutionary patterns in biological brains, where complex dendritic structures emerge in larger neural systems due to increased computational demands~\cite{stuart2016dendrites}. Integrating dendritic neurons into artificial networks may thus reflect fundamental biological principles, offering insights for efficient and scalable neural network design.

\section{Methods}\label{method}
\subsection{Datasets for machine learning experiments}

The present study leverages three commonly used datasets: ImageNet, CIFAR-100, and LibriSpeech, for model training and evaluation. These datasets are commonly served as benchmarks in deep learning research.

\textbf{ImageNet Dataset:} For this study, we use the ILSVRC 2012 subset of the ImageNet dataset, which consists of 1.2 million training images and 50,000 validation images from 1,000 categories~\cite{deng2009imagenet}. The images vary in size and are resized to a fixed resolution of 224x224 pixels for uniformity, per the standard ResNet procedure~\cite{he2016deep}. The typical data augmentation techniques, such as random cropping, random horizontal flipping, and color jittering, were applied during training to enhance the model's generalization ability.

\textbf{CIFAR-100 Dataset:} The dataset consists of 60,000 32x32 color images in 100 classes, with 600 images in each class. There are 50,000 training images and 10,000 test images~\cite{krizhevsky2009learning}. Like the ImageNet data processing, we followed the typical data augmentation procedure~\cite{he2016deep}. 

\textbf{LibriSpeech dataset:} The dataset is a publicly available English speech corpus for Automatic Speech Recognition (ASR) training and evaluation from the LibriVox project's audiobooks. It consists of 1000 hours of transcribed speech, divided into training, development, and testing subsets~\cite{panayotov2015librispeech}. The experiment utilizing this dataset can be found in the Appendix E.

\subsection{Model architectures}\label{method:arch}

In this study, we primarily used the ResNet-18 architecture as the baseline model. ResNet-18 is an 18-layer deep residual neural network, a seminal model proposed by He et al.~\cite{he2016deep}. The baseline configuration of ResNet-18 encapsulates an initial convolutional layer, followed by four residual blocks, each of which consists of two convolutional layers. This pattern constitutes the primary structure of our working model; in contrast to the original ResNet-18 model, our adapted architecture positions the shortcut connection after the ReLU (Rectified Linear Unit) activation function. This modification is imperative to ensure the compatibility of the dendritic structure with the model architecture.

For experiments on scaling up networks, we scaled up each network layer by the same designated factor except for the input and output of the model. For models with dendritic neurons, we replaced neurons in the standard model with dendritic neurons with $K$ dendrites as specified by the experiment setting, except for the input and output layers of the model. To maintain the uniform model complexity scaling throughout the model, we equip the input layer and the penultimate layer of the model with neurons of $\sqrt{K}$ instead of $K$ dendrites. The same setting is also employed in experiments designed to compare models that share identical inter-layer communication costs.

For models trained on CIFAR-100, we observed training instability. Therefore we clipped the gradient norm to $1.0$ during model training. We also added an extra batch norm to each dendrite to improve model stability. This additional batch norm can be fused with the previous layer and thus will not add extra computation burden at the inference stage. 

In addition to models based on the ResNet-18 architecture, we have corroborated our findings using a model devoid of shortcut connections. This strategy ensures that the benefits observed are not strictly confined to a particular architecture. The configuration of this model is delineated in Appendix E, where the corresponding experimental outcomes can also be found.

Moreover, our experimentation extended to the transformer-based model. Within this model, the standard feedforward layers are substituted with network layers based on dendritic neurons. Comprehensive details pertaining to this modification can be found in Appendix E.

\subsection{Model training}

We trained all models with a cosine learning rate decay schedule and the SGD optimizer with a momentum of 0.9.

For ImageNet with dense ResNet models, the learning rate was initialized at $0.4$ (instead of $0.1$ to compensate for the batch size used for training), and models were trained for 120 epochs, including two warm-up epochs with a learning rate of $0.04$. Weight decay was set to $1\times 10^{-4}$. A batch size of 1024 was employed, and the training was distributed across 8 GPUs. 

For ImageNet with sparse ResNet models, the models were trained for 200 epochs with an initial learning rate of 0.1 and 2 warm-up epochs at a learning rate of 0.01. The weight decay parameter was set to $1\times 10^{-4}$. To achieve a sparse ratio of 85\%, we applied L1-unstructured global pruning in 5 rounds, conducted between epochs 40 and 140. Subsequently, the models were trained for an additional 60 epochs.

Finally, for CIFAR-100 models, we trained them for 200 epochs with a learning rate of 0.05, including two warm-up epochs at a learning rate of 0.005. A batch size of 64 was utilized, and the weight decay parameter was set to $5\times 10^{-4}$.

Our investigation emphasizes the comparative analysis of the performance of various models under identical training conditions, facilitating an equitable assessment of the distinct capabilities of each model. Consequently, all models within the comparison group undergo training with the same hyper-parameters, barring the requisite architecture adjustments. Further details concerning the experiments can be found in the accompanying source code. 

\subsection{Communication cost analysis}\label{method:comm}
\subsubsection{Biological neural network}\label{method:comm:bio}
We modeled a baseline network layer with \(D\) input and \(D\) output neurons, resulting in \(D^2\) synapses. For each input neuron, \(D\sqrt{K}\) synaptic targets were randomly distributed within either a unit square (2D) or a unit cube (3D), where \(K\) represents the number of dendrites per neuron and is varied across \(K=1, 4, 16, 64\).

To estimate the wiring length, we computed the Euclidean minimal spanning tree over each synapse set, using the method described by Steele et al.~\cite{steele1989worst}. This was repeated 10 times to obtain an average total path length. The communication cost \(C_E\) was then calculated as the product of \(D/\sqrt{K}\) and the mean path length. Finally, we fitted the results to a function of the form \(\alpha \cdot D/\sqrt{K} \cdot (D\sqrt{K})^\beta\), allowing us to extract the scaling exponent \(\beta\) and compare it to theoretical expectations.

\subsubsection{Artificial neural network}\label{method:comm:ANN}
We adopted a simplified parallel explicit communication model (PECM), inspired by Dally et al.~(2022), to estimate data movement costs in ANN inference hardware. The model assumes that computation occurs on a 2D grid of processing engines (PEs), interconnected via an on-chip network (NoC) within a unit square. This abstraction captures essential features of neuromorphic and parallel architectures used in real-world ANN hardware~\cite{akopyan2015truenorth,davies2018loihi,ma2024darwin3}.

Communication costs were analyzed for both point neuron and dendritic neuron-based models, with derivations provided in the Appendix~\ref{app:comm_complexity}. The focus was on two key components: aggregation cost \(C_A\) and external communication cost \(C_E\), and their dendritic counterparts \(\hat{C}_A\) and \(\hat{C}_E\).

We evaluated the communication cost ratio \(\eta\) between dendritic and point neuron models across a range of parameters: different values of input dimensionality \(D\) for point neurons and varying numbers of dendrites per neuron \(K\) for dendritic neurons. The impact of sparsity was also assessed by analyzing \(\hat{C}_E\) under varying sparsity levels and dendrite counts.

\subsection{Code availability}
The entirety of the code used to produce the findings presented herein will be openly accessible to the public upon publication. 
\bibliography{sn-bibliography}

\begin{appendices}


\section{Proof of Theorem~\ref{thm:1}}
As illustrated in Eq.~\ref{equ:entropy}, the left-hand term—representing the information from the pooled neuron output—is bounded by the information present in the dendrites being pooled. Following shows a comprehensive proof of this theorem.

\begin{theorem}
\label{thm:1}
The entropy of the sum (neuron output) of random variables (dendritic outputs) $\rd_1, \rd_2, \dots, \rd_K$ is less than or equal to the joint entropy of these random variables. The relation between the two is given as:

\begin{equation}
\label{equ:entropy}
    H\left(\sum_{j=1}^{K}\rd_j\right) = H(\rd_1, \dots, \rd_K) -  H\left(\rd_1, \dots, \rd_K \mid  \sum_{j=1}^{K}\rd_j\right)\,.
\end{equation}
\end{theorem}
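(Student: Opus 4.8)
The identity is a rearrangement of the chain rule for (joint) entropy together with the observation that the somatic output is a deterministic function of the dendritic outputs. The plan is as follows.

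First I would introduce the shorthand $\rs = \sum_{j=1}^{K}\rd_j$ for the pooled neuron output, viewed as a random variable on the same probability space as $\rd_1,\dots,\rd_K$. The key structural fact is that $\rs$ is a (measurable) deterministic function of the tuple $(\rd_1,\dots,\rd_K)$, so that conditioning on $(\rd_1,\dots,\rd_K)$ leaves no residual uncertainty about $\rs$; in the discrete setting this is the statement $H\!\left(\rs \mid \rd_1,\dots,\rd_K\right)=0$.

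Next I would expand the joint entropy $H(\rd_1,\dots,\rd_K,\rs)$ of the augmented tuple in two different orders using the chain rule. Expanding with the $\rd_j$'s first gives
\begin{equation}
H(\rd_1,\dots,\rd_K,\rs) = H(\rd_1,\dots,\rd_K) + H\!\left(\rs \mid \rd_1,\dots,\rd_K\right) = H(\rd_1,\dots,\rd_K),
\end{equation}
where the second equality uses the deterministic-function fact above. Expanding with $\rs$ first gives
\begin{equation}
H(\rd_1,\dots,\rd_K,\rs) = H(\rs) + H\!\left(\rd_1,\dots,\rd_K \mid \rs\right).
\end{equation}
Equating the two expressions and solving for $H(\rs)$ yields exactly Eq.~\ref{equ:entropy}. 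The inequality $H\!\left(\sum_{j}\rd_j\right)\le H(\rd_1,\dots,\rd_K)$ then follows immediately, since the subtracted term $H\!\left(\rd_1,\dots,\rd_K \mid \sum_j \rd_j\right)$ is a conditional entropy and hence nonnegative; it quantifies precisely the information lost by channel sharing.

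The argument has essentially no hard step in the discrete case — it is bookkeeping with the chain rule. The only point requiring care, which I would flag explicitly, is the interpretation when the $\rd_j$ are continuous and $H$ denotes differential entropy: the augmented tuple $(\rd_1,\dots,\rd_K,\rs)$ is then supported on a lower-dimensional affine subspace, so the naive identity $H(\rd_1,\dots,\rd_K,\rs)=H(\rd_1,\dots,\rd_K)$ must be read as a statement about the density on that subspace (equivalently, one works with a coordinate change $(\rd_1,\dots,\rd_K)\mapsto(\rd_1,\dots,\rd_{K-1},\rs)$ whose Jacobian is $1$). Under that reading the same two-way chain-rule expansion goes through verbatim, and I would present the discrete version as the main line with a one-sentence remark covering the continuous case.
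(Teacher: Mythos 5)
Your proposal is correct and follows essentially the same route as the paper: the paper likewise isolates the fact that $H\left(\sum_{j=1}^{K}\rd_j \mid \rd_1,\dots,\rd_K\right)=0$ as a lemma, expands the joint entropy $H\left(\rd_1,\dots,\rd_K,\sum_{j=1}^{K}\rd_j\right)$ via the chain rule in both orders, and equates to obtain Eq.~\ref{equ:entropy}, with the inequality following from nonnegativity of conditional entropy. Your additional remark on the continuous (differential-entropy) case goes beyond the paper, which treats only the discrete setting, but the core argument is the same.
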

In order to prove Theorem~\ref{thm:1}, we first prove the following lemma.
\begin{lemma}
The conditional entropy of $\sum_{j=1}^{K}\rd_j$ given $\rd_1, \rd_2, \dots, \rd_K$ is zero, i.e.,
\label{lem:1}
\begin{equation}
    H\left(\sum_{j=1}^{K}\rd_j  \mid \rd_1, \rd_2, \dots, \rd_K \right) = 0\,.
\end{equation}
\end{lemma}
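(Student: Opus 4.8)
The plan is to prove Lemma~\ref{lem:1} directly from the definition of conditional entropy, exploiting the fact that the sum $S := \sum_{j=1}^{K}\rd_j$ is a deterministic (measurable) function of the tuple $(\rd_1,\dots,\rd_K)$. First I would recall that for discrete random variables, $H(S \mid \rd_1,\dots,\rd_K) = \sum_{d_1,\dots,d_K} p(d_1,\dots,d_K)\, H(S \mid \rd_1 = d_1, \dots, \rd_K = d_K)$, where the outer sum ranges over the support of the joint distribution. The key observation is that once $\rd_1 = d_1,\dots,\rd_K = d_K$ are fixed, the value of $S$ is forced to equal $\sum_{j=1}^K d_j$ with probability $1$, so the conditional distribution of $S$ given that event is a point mass.

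The next step is simply to note that the entropy of a point mass is zero: if $P(S = s_0 \mid \rd_1 = d_1,\dots,\rd_K = d_K) = 1$ for $s_0 = \sum_j d_j$ and $0$ otherwise, then $H(S \mid \rd_1 = d_1,\dots,\rd_K = d_K) = -1\cdot\log 1 = 0$. Summing $0$ against the probabilities $p(d_1,\dots,d_K)$ gives $H(S \mid \rd_1,\dots,\rd_K) = 0$, which is exactly the claim. I would also remark that the same conclusion holds more generally (in the measure-theoretic / differential-entropy setting) by the standard fact that conditioning on a super-$\sigma$-algebra that makes a variable measurable kills its conditional entropy; but for the paper's purposes the discrete argument is the clean one to present.

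There is essentially no serious obstacle here — the only thing to be slightly careful about is the distinction between discrete entropy and differential entropy, since the dendritic outputs $\rd_j = \sigma(\hat{\rd}_j)$ could in principle be continuous. If the intended setting is differential entropy, one should phrase the argument via the chain rule $h(S, \rd_1,\dots,\rd_K) = h(\rd_1,\dots,\rd_K) + h(S \mid \rd_1,\dots,\rd_K)$ together with the degeneracy of the joint law of $(S,\rd_1,\dots,\rd_K)$ on the hyperplane $\{S = \sum_j d_j\}$, which forces $h(S \mid \rd_1,\dots,\rd_K) = -\infty$ — so the cleanest and most robust presentation restricts to the discrete case (or treats $H$ as Shannon entropy throughout), where the statement is literally $H(\,\cdot\mid\cdot\,) = 0$. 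Once Lemma~\ref{lem:1} is in hand, Theorem~\ref{thm:1} follows immediately: apply the chain rule for joint entropy in two ways, $H(S, \rd_1,\dots,\rd_K) = H(\rd_1,\dots,\rd_K) + H(S \mid \rd_1,\dots,\rd_K) = H(\rd_1,\dots,\rd_K)$ by the lemma, and also $H(S, \rd_1,\dots,\rd_K) = H(S) + H(\rd_1,\dots,\rd_K \mid S)$; equating the two and rearranging yields Eq.~\ref{equ:entropy}.
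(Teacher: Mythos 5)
Your proof is correct and follows essentially the same route as the paper's: both expand $H\bigl(\sum_j \rd_j \mid \rd_1,\dots,\rd_K\bigr)$ by the definition of conditional entropy over the joint support and observe that each conditional term vanishes because the sum is deterministic given the $\rd_j$. Your added caveat about the discrete versus differential-entropy setting is a reasonable refinement of the paper's own restriction to discrete random variables, but it does not change the argument.
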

\begin{proof}
    If the values of $\rd_1, \rd_2, \dots, \rd_K$ are known, then the value of $\sum_{j=1}^{K}\rd_j$ is also known. Therefore, the statement is intuitively true. For discrete random variables $\rd_i$, a formal proof can be presented as follows.
    \begin{align}
    &{\phantom{=.}} H\left(\sum_{j=1}^{K}\rd_j  \mid \rd_1, \rd_2, \dots, \rd_K \right) \nonumber\\
    &=\sum_{d_1,\dots,d_K}p(d_1,\dots,d_K)H\left(\sum_{j=1}^{K}\rd_j  \mid \rd_1=d_1, \dots, \rd_K=d_K \right) \nonumber \\
    &= \sum_{d}0 = 0\,.
    \end{align}
\end{proof}
We can now prove Theorem~\ref{thm:1}, by first making use of a relationship between joint entropy and conditional entropy~\cite{coverelements}.
\begin{proof}[Proof of Theorem~\ref{thm:1}:]
The joint entropy of $\rd_1, \rd_2, \dots, \rd_K$ and $\sum_{j=1}^{K}\rd_j$ is:
\begin{align}
\label{eq:1}
&{\phantom{=.}}H\left(\rd_1, \dots, \rd_K  , \sum_{j=1}^{K}\rd_j \right)  \nonumber\\
&= H\left(\sum_{j=1}^{K}\rd_j\right) + H\left(\rd_1, \dots, \rd_K  \mid \sum_{j=1}^{K}\rd_j \right)  \nonumber\\
&= H(\rd_1, \dots, \rd_K) + H\left(\sum_{j=1}^{K}\rd_j  \mid \rd_1, \dots, \rd_K \right) \,. 
\end{align}
Therefore,
\begin{align}
    &{\phantom{=.}}H\left(\sum_{j=1}^{K}\rd_j\right)  \nonumber\\
    &= H(\rd_1, \dots, \rd_K) + H\left(\sum_{j=1}^{K}\rd_j  \mid \rd_1, \dots, \rd_K \right)  
     - H\left(\rd_1, \dots, \rd_K  \mid \sum_{j=1}^{K}\rd_j \right) \nonumber\\
    &= H(\rd_1, \dots, \rd_K) - H\left(\rd_1, \dots, \rd_K  \mid \sum_{j=1}^{K}\rd_j \right) \;\;\;\;\;\;\quad\quad\quad\quad (\textit{From Lemma~\ref{lem:1}.})
\end{align}
\end{proof}

Since the conditional entropy $H\left(\rd_1, \dots, \rd_K  \mid \sum_{j=1}^{K}\rd_j \right)$ is non-negative, the upper bound of $H\left(\sum_{j=1}^{K}\rd_j\right)$ is $H(\rd_1, \rd_2, \dots, \rd_K)$.

\section{Computing and parametric complexity of models}
\label{app:complexity}
\begin{table}[htb!]
    \centering
    \caption{Complexity data on typical models}
    \label{tab:complexity}
    \begin{tabular}{cccc}
        \hline
        Dendrites/neuron ($K$) & $\Psi=1/\sqrt{K}$ & Computing complexity (MMACs) & \# of Parameters \\
        \hline
        1 (Resnet-18) & 1 & 1,821.63 & 11,689,512 \\
        4 & 1/2 & 1,804.34 & 11,556,200 \\
        16 & 1/4 & 1,799.65 & 11,521,800 \\
        64 & 1/16 & 1,799.37 & 11,512,664 \\
        \hline
    \end{tabular}
\end{table}

Table~\ref{tab:complexity} shows the computing and parametric complexity comparison of models from the light-blue dashed curve of Fig.~\ref{fig:imgnet}. 
We use customized THOP package to calculate the model complexity data where we count two sum operations as one MAC operation.

\section{Derivation of Communication Costs for PE Mesh Architecture}\label{app:comm_complexity}
\subsection*{Point neurons based model}
Our analysis initiates with a model composed of point neurons. As previously mentioned, our investigation focuses on two network layers. We assume that the first layer sends an output of $D$ dimensions to the second layer. For convenience and without loss of generality, we assume that each of the $D$ dimensions originates from one PE on the chip.

In order to arrange $D$ PEs on die area of size $1\times 1$, each PE must have a height and width of $l={1}/{\sqrt{D}}$, resulting in an area size of ${1}/{D}$. Similarly, the second layer is also composed of $D$ PEs of the same size. Consequently, we obtain a grid of $N$ by $N$ PEs with $N=\sqrt{D}$, with a distance of $l$ between the center of each pair of neighboring PEs. See Fig.~\ref{fig:grid}-A for a visual illustration.
\begin{figure}[htb!]
    \centering
\includegraphics[width=0.9\linewidth]{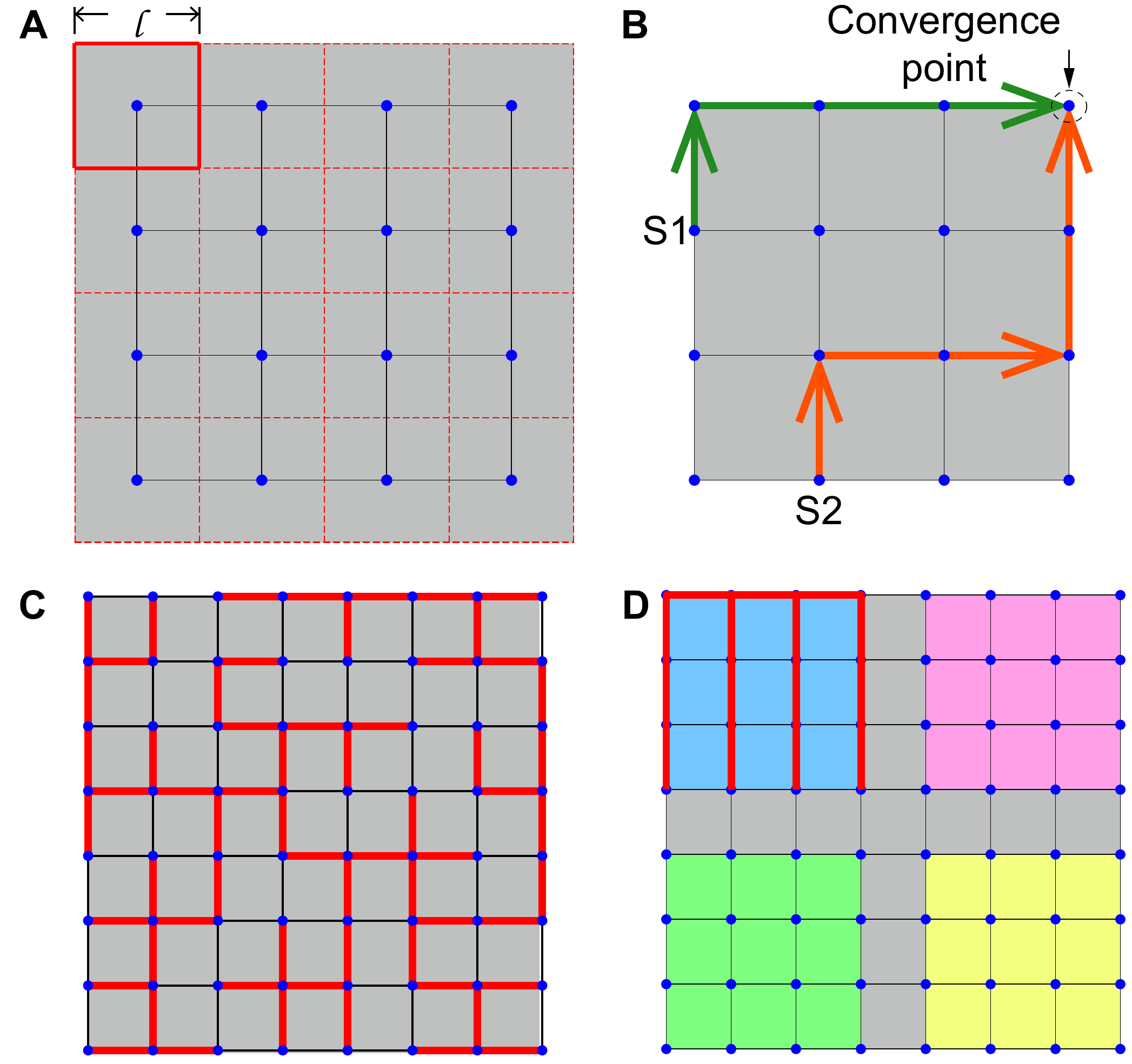}
     \caption{(A) Showcases a 16-unit grid of processing elements (PEs), where each PE has a side length, $l$, computed as $l=1/\sqrt{D}$ or $1/4$ in this example. The boundary of top-left PE is emphasized with solid red lines. For improved clarity, only the grid of central points will be displayed henceforth.(B) Depicts two city-walk paths originating from S1 (green path) and S2 (orange path) leading to a convergence point. The green path has a total length of $4l$, while the red path spans $5l$ in length. (C) Demonstrates a city-walk path with red lines, connecting all points on an $8\times 8$ grid. This route enables data dissemination across the target set with minimal cost. (D) Illustrates four groups of PEs, each color-coded to represent a dendritic neuron with 16 dendrites. Within each group, dendritic outputs are combined to generate a single output. The aggregation path can be assessed using the MRST algorithm, with an example path displayed in the top-left block.}
    \label{fig:grid}
\end{figure}

For this arrangement we have 
\begin{equation}
\label{Eq: CA}
    C_A = D(\sqrt{D}-1)l = D-\sqrt{D} \,,
\end{equation}
as measured with Manhattan distance. Furthermore, an illustrative example of signal propagation within this context is provided in Fig.~\ref{fig:grid}-B. Derivation of Eq.~\ref{Eq: CA} can be found in Appendix~\ref{app:derivation}.

We assess $C_E$ with minimal rectilinear spanning tree (MRST) algorithm~\cite{murty2008graph}. Given a grid of $N \times N$ PEs, the objective is to deliver every dimension of the data to each PE. The MRST algorithm enables us to determine the minimal path length required to connect all PEs, which is $(N^2-1)\cdot l$. An example path is illustrated in  Fig.~\ref{fig:grid}-C. Consequently, we obtain the cost of delivering data as
\begin{equation}
\label{Eq: CE}
    C_E =  (N^2-1)\cdot l \cdot D = (D-1) \sqrt{D} \,.
\end{equation}

\subsection*{Dendritic neuron based model}
As earlier we maintain the number of parameters and floating-point operations (FLOPs) consistent with those in the point neuron model scenario. That is, given that each neuron has $K$ dendrites, one layer of the model under examination will have a total of $M=D\sqrt{K}$ dendrites. As illustrated in Fig.~\ref{fig:grid}-D, every group of $K$ dendrites aggregates to form a single output dimension. Consequently, the first layer will produce an output with a dimensionality of $\hat{D}={D}/{\sqrt{K}}$, which serves to maintain an equivalent computational complexity as the point neuron-based model previously described. We reiterate our assumption that those $\hat{D}$ neurons are arranged in a grid format, specifically of size $\hat{N} \times \hat{N}$, with $\hat{N}=\sqrt{\hat{D}}$.

We postulate that the computation of each dendrite is processed by one PE. In this scenario, the die area is divided into $M$ units, with each unit occupying a specific area. The height and width of this area, denoted by $\hat{l}$, can be calculated as $\hat{l} = 1/\sqrt{M} $. Through this, we arrive at the size of a PE for processing each dendrite being $\frac{1}{D\sqrt{K}}$, which is $1/{\sqrt{K}}$ of the point neuron-based model PE die size. This corresponds to the assumption that a dendrite in this analysis receives a proportion of $1/{\sqrt{K}}$ of the inputs that a point neuron receives. 

In light of the aforementioned derivation, we note that the signal transfer cost, denoted as $\hat{C}_A$, consists of two components. The first component, $\hat{C}_{AG}$, refers to the cost of aggregating dendritic outputs for each neuron. The second component, $\hat{C}_{AA}$, represents the cost of transmitting the aggregated data of all neurons off the die. Their expressions are as follows.
\begin{equation}
\label{Eq: CA_hat}
    \hat{C}_A  = \hat{C}_{AG}+\hat{C}_{AA}
    <\sqrt{D}K^{1/4}+\frac{D}{\sqrt{K}} \,.
\end{equation}
Please see Appendix~\ref{app:derivation_2} for the derivation.

In congruence with the approach adopted for dense models, we also employ the MRST algorithm to estimate the communication cost when dealing with sparse models. Considering the variability in the communication cost due to different sparse connection patterns, we sample a set of 100 random connection patterns for each setting to provide a robust estimate of the average cost.
Akin to the point neuron models, we will not attempt to derive $\hat{C}_I$, although we have the relationship of $D = \sqrt{K} \cdot \hat{C}_I$ under the assumptions of the equivalent parameter/FLOPs count setting.

As for the $\hat{C}_E$ component, note that the second layer receives $\frac{D}{\sqrt{K}}$ inputs and consists of $M$ units. Utilizing the MRST method, the cost associated with one-dimensional input connecting to $M$ units can be computed as $(M-1) \cdot \hat{l}$.
We arrive at 
\begin{equation}
\label{Eq: C_E_dendrite}
    \hat{C}_E= \frac{D}{\sqrt{K}}(D\sqrt{K}-1)\cdot \hat{l} \approx  D^{\frac{3}{2}}/ K^{\frac{1}{4}} \,. 
\end{equation}

\subsection{Derivation of Eq.~\ref{Eq: CA}}
\label{app:derivation}

For simplicity, we place the inter-chip communication junction point at the top-right corner, in the $0$-th row and column. It starts with ID 0, counting from right to left and top to bottom. 
Therefore, the total cost of propagating outputs from every PE to the junction point is:
\begin{align}
\label{Eq: CA_derivation}
    C_A &= \left(\sum_{x,y=0}^{N-1}(x+y) \right)l\nonumber\\
    &= \left(\sum_{x=0}^{N-1}x\sum_{y=0}^{N-1}1+ \sum_{x=0}^{N-1}1\sum_{y=0}^{N-1}y\right)l \nonumber\\
    &= \left(\frac{(N-1)N}{2}N+ N\frac{(N-1)N}{2}\right) l\nonumber\\
    &=N^2(N-1)l \nonumber\\
    &=D(\sqrt{D}-1)\frac{1}{\sqrt{D}} \nonumber\\
    &=D-\sqrt{D} \,.
\end{align}

\subsection{Derivation of Eq.~\ref{Eq: CA_hat}}
\label{app:derivation_2}
\begin{align}
    &\hat{C}_{AG}  =(K-1)\cdot \hat{D} \cdot \hat{l}\nonumber\\
    &\; \; \; \;\; \;\;= \sqrt{D}(K^{1/4}-K^{-3/4}) < \sqrt{D}K^{1/4} \,,\\
    &\hat{C}_{AA}  = \hat{N} \hat{N} (\hat{N}-1) \hat{l}(\sqrt{K})< \frac{D}{\sqrt{K}} \,,\\
    &\hat{C}_A  = \hat{C}_{AG}+\hat{C}_{AA}
    <\sqrt{D}K^{1/4}+\frac{D}{\sqrt{K}} \,.\label{Eq: C_A_dendrite}
\end{align}

\section{Communication cost Analysis for block-wise GEMM computation on GPU}\label{app:gpu}

\subsection{Theoretical analysis}

In this section, we analyze how the adoption of the proposed dendritic structure affects communication costs during neural network inference when compared to a point neuron-based structure on typical GPU-like architectures.

For this part of analysis we follow the notation used by the GPU community as in CUTLASS~\cite{CUTLASS}, which differs from the notation used in the rest of the manuscript.

First, we delineate our setting, assuming a feed-forward network layer. For the standard model, the computation of the layer can be expressed as \[\mC^f = \sigma(\mA \cdot \mB),\]
where \(\sigma\) represents the element-wise nonlinear output function. For clarity, we omit the bias term.

The computational complexity of the nonlinear function $\sigma$ is relatively small compared to that of the matrix multiplication. Therefore, our focus will be on the matrix multiplication \[\mC = \mA \cdot \mB.\] And we have $\mA \in \mathbb{R}^{M\times L}$, $\mB \in \mathbb{R}^{L\times N}$, $\mC \in \mathbb{R}^{M\times N}$.

Similarly, for the dendritic model, we have \[\hat{\mC}^f = \sigma(\hat{\mA} \cdot \hat{\mB})\] before the dendritic aggregation process. The dendritic layer output $\hat{\mC}^o$ is then computed as \[ \hat{C}^o_{i,j}= \sum _{s=1}^K {\hat{C}_{i,(j-1)K+s}}\], where $K$ is the number of dendrites per neuron. 
We have $\hat\mA \in \mathbb{R}^{M\times L/\sqrt{K}}$, $\hat\mB \in \mathbb{R}^{L\times N\sqrt{K}}$, $\hat\mC \in \mathbb{R}^{M\times N\sqrt{K}}$.
As described above, we reduce every $K$ neighboring elements along $N$ dimensions in $\hat{\mC}^f$ into one element, namely the output of the dendritic layer $\hat{\mC}^o$. Given that dendritic aggregation can be performed locally with low cost and the computing complexity of nonlinear functions is small. We again focus on the core matrix multiplication part described as $\hat{\mC}=\hat{\mA} \cdot \hat{\mB}$. Though the aggregation process is important for reducing output communication cost.

To understand the communication cost we need to get some idea about the memory hierarchy of a GPU. The architecture depicted in the Fig. \ref{fig:sm} represents a simplified illustration of a typical GPU processor such as Nvidia A100/H100. In this architecture, the global memory is a central storage resource accessible to all processing elements (PEs) within the processor. The PE units here correspond to Streaming Multiprocessors (SMs) of GPUs.  To reduce the communication cost of accessing the high latency global memory, there is also an on-chip L2 cache that accelerates data I/O of PEs.  Each PE is equipped with its own private shared memory, which is utilized by the tensor cores housed within that PE. Each tensor core within a PE is further equipped with its own private registers for localized computing, ensuring rapid access to frequently used data and further optimizing computational efficiency.

\begin{figure}[htb!]
\centering
\includegraphics[width=0.4\linewidth]{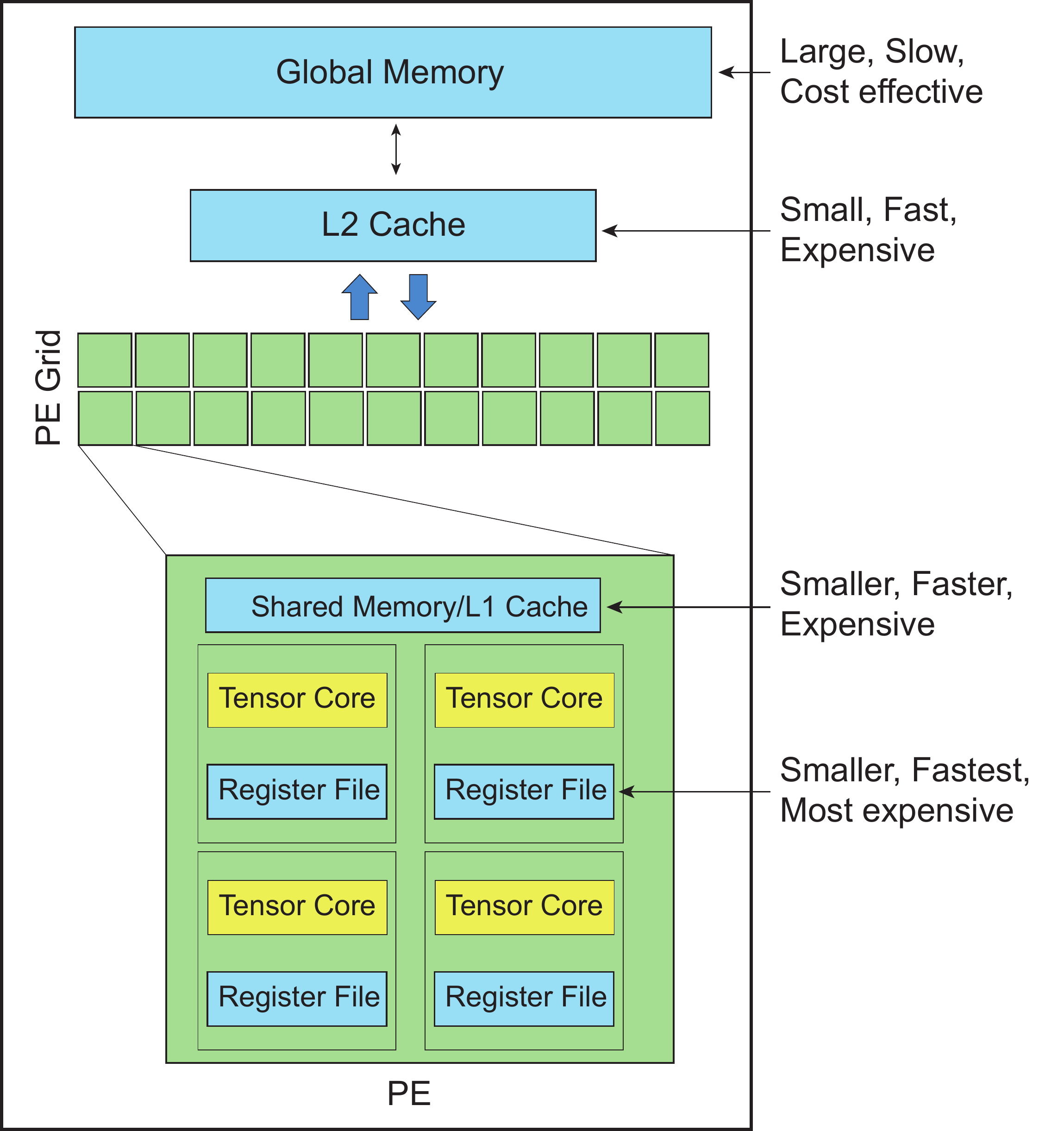}
\caption{Simplified architecture of a GPU processor. The global memory, characterized by being large, slower, and cost-effective, is accessible by all processing elements (PEs) in the processor~\cite{CUTLASS,choquette2021nvidia}. On chip L2 cache enable much faster access to data accessed by PE units. Each PE contains its own private shared memory, which is smaller, faster, and more costly, shared by the tensor cores within that PE. Each tensor core has its own private register file, which is the smallest, fastest, and associate with low communication cost.}
    \label{fig:sm}
\end{figure}

\begin{figure}[htb!]
    \centering
\includegraphics[width=0.5\linewidth]{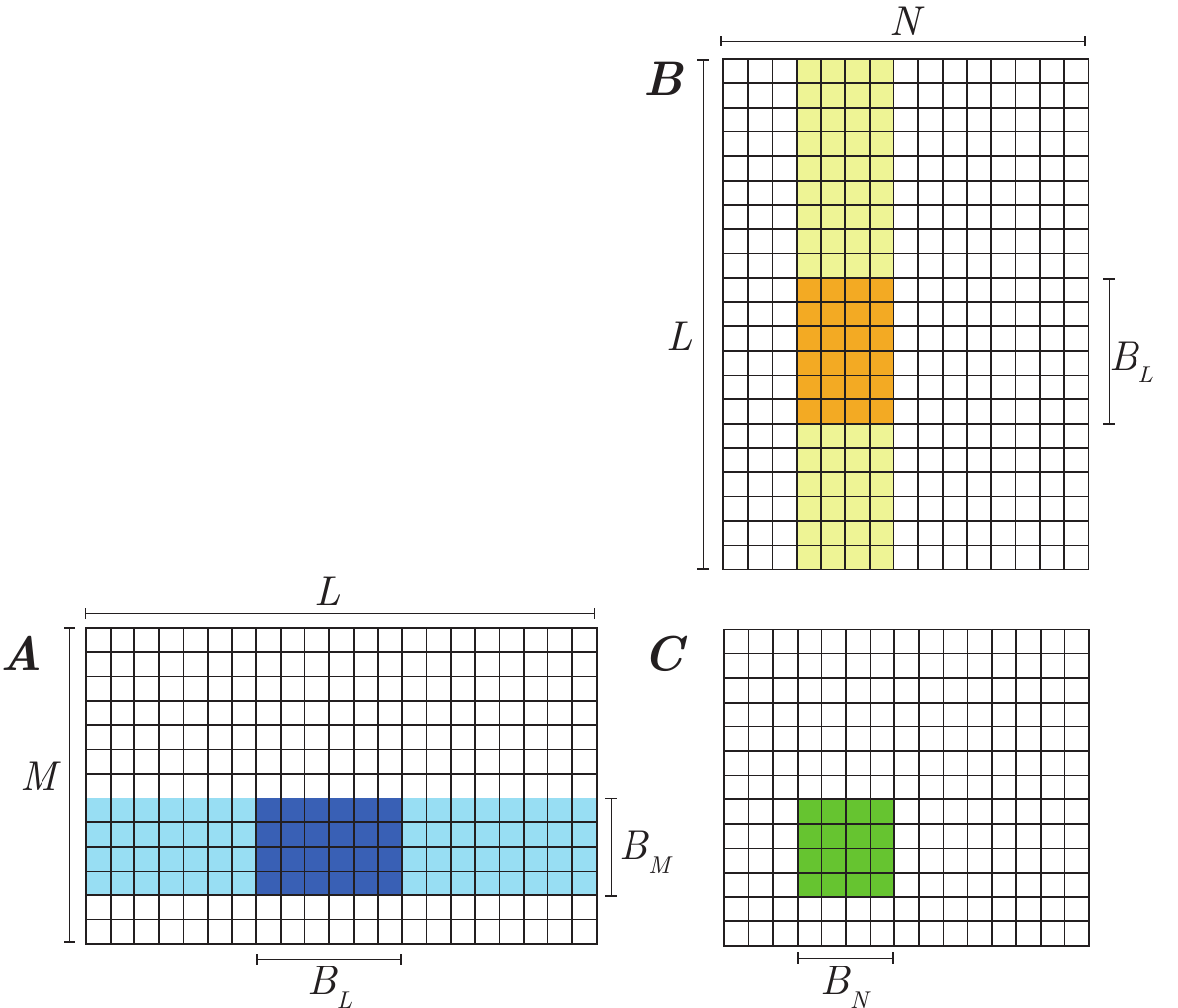}
         \caption{Block-wise General Matrix-Matrix Multiplication (GEMM) illustration. This figure demonstrates the multiplication of matrices $\mC = \mA \cdot \mB$, where $\mA \in \mathbb{R}^{M \times L}$, $\mB \in \mathbb{R}^{L \times N}$, and $\mC \in \mathbb{R}^{M \times N}$. The computation is divided into blocks to optimize performance and efficiency.}
    \label{fig:gemm}
\end{figure}

To avoid diving too deep into GEMM optimization, we refer readers to the CUTLASS documentation for introduction on block-wise general matrix multiply (GEMM)~\cite{CUTLASS}.

The pseudo code for the baseline GEMM is shown in Alg. ~\ref{alg:blockwise-gemm}. We first calculate communication complexity of the baseline models without considering L2 cache.

From Fig.~\ref{fig:gemm}, for each block in $\mC$ of size $B_M$ by $B_N$, we need to read $L(B_M+B_N)$ units of data from global memory. Therefore for computation of the whole $\mC$ we need to read \[(B_M+B_N)\cdot L\cdot \frac{M}{B_M}\cdot \frac{N}{B_N}\] units of data. To write the result matrix $\mC$ to the global memory, the write cost is 
\[M\cdot N.\]

For a dendritic model with $K$ dendrites per neuron, we have $\hat\mA \in \mathbb{R}^{M\times L/\sqrt{K}}$, $\hat\mB \in \mathbb{R}^{L\times N\sqrt{K}}$, $\hat\mC \in \mathbb{R}^{M\times N\sqrt{K}}$. With this new matrix dimensionality we have the following memory read cost for the dendritic model: \[(B_M+B_N)\cdot \frac{L}{{\sqrt{K}}}\cdot \frac{M}{B_M}\cdot \frac{N\cdot {\sqrt{K}}}{B_N} = (B_M+B_N)\cdot L\cdot \frac{M}{B_M}\cdot \frac{N}{B_N}.\] This is the same as the point neuron based model. 
As for the write cost, because we reduce elements in $\hat\mC$ by group of $K$, we have a writing cost of : \[\frac{M\cdot N}{{\sqrt{K}}}.\]

From the above analysis, it is evident that with the same $B_M$, $B_N$ combinations for equivalent point neuron and dendritic neuron based network layers, there is no difference in the total memory read cost. The memory write cost can be reduced by $\sqrt{K}$. This analysis may suggest that adopting dendritic structure can only lead to minor reduction in communication cost given memory read is much larger than memory write. 

The key is to coordinate block processing properly to take advantage of the L2 cache. For this part we refer the reader to the "L2 cache optimization" section of the Triton GEMM tutorial~\cite{tritonlangMatrixMultiplication} and CUTLASS documentation~\cite{CUTLASS} for background information. 

To improve computational efficiency, it is beneficial to take advantage of the sharing of data among neighboring blocks of the matrix $\mC$. This is achieved by computing several adjacent rows of $\mC$, which correspond to the same rows in the matrix $\mA$, as a group. This grouped computation strategy allows for the reuse of input blocks from matrices $\mA$ and $\mB$, minimizing data reloading and maximizing cache utilization. After completing the computation for one group, the process then transitions to another group. This approach not only streamlines data access patterns but also significantly reduces memory overhead and improves overall performance of GEMM computation.

Here we provide a simplified analysis on how dendritic architecture can help improve L2 cache hit rate therefore reduce communication cost on global memory access. Due to the complexity of the hierarchy cache mechanism, this analysis is not intended to be precise, but rather to help provide a theoretical understanding. 

Assume that we form a block group of $G$ rows of blocks from $\mA$ according to the standard approach to improve the efficiency of L2 cache~\cite{tritonlangMatrixMultiplication,CUTLASS}. For this to work, we need to fit the block of $G\cdot B_M$ rows and a single $B_N$ column into the L2 cache. We denote the capacity of the L2 cache as $Q$. That is, we have \[(G\cdot B_M+ B_N)\cdot L=Q.\] It is possible to put multiple columns from the $\mB$ matrix in the cache, but we can consider that it is absorbed in $B_N$. To utilize the cache efficiently, $G\cdot B_M$ need to stay in the cache while computations are performed along the $N$ axis~\cite{tritonlangMatrixMultiplication} where each step of computation requires read $B_N$ columns from the memory. In this way we can calculate that the memory read cost on matrix $\mB$ part is \[\frac{N\cdot L \cdot M}{B_M \cdot G}.\] And the read cost on $\mA$ part is $M\cdot L$, the total read cost would be \[\frac{N\cdot L \cdot M}{Q/L-B_N}+M\cdot L.\] It is desirable to set $B_N$ to a small value. Therefore the read cost will roughly be equal to \[\frac{N\cdot L^2 \cdot M}{Q}+M\cdot L.\] For models with dendritic neurons of $K$ dendrites, we will have a read cost of  \[\frac{N\cdot L^2 \cdot M}{(Q\sqrt{K})}+M\cdot L/\sqrt{K}=(\frac{N\cdot L^2 \cdot M}{Q}+M\cdot L)/\sqrt{K}.\] 
Therefore, we can significantly reduce global memory read access through adopting dendritic structure. 

\begin{algorithm}
\caption{Block-wise GEMM (Modified from the original algorithm from~\cite{tritonlangMatrixMultiplication})}
\label{alg:blockwise-gemm}
\begin{algorithmic}[1]
\State \textbf{Input:} Matrices $\mA \in \mathbb{R}^{M \times L}$, $\mB \in \mathbb{R}^{L \times N}$, $\mC \in \mathbb{R}^{M \times N}$
\State \textbf{Output:} Matrix $\mC$ containing the result of $\mC = \mA \times \mB$
\State \textbf{Define:} Block sizes $B_M$, $B_N$, $B_L$
\For{\textbf{each} $m$ in $0$ to $M$ by $B_M$} \Comment{Parallel execution over blocks of $C$}
    \For{\textbf{each} $n$ in $0$ to $N$ by $B_N$} \Comment{Parallel execution over blocks of $C$}
        \State Initialize $acc \gets \text{zeros}(B_M, B_N)$
        \For{\textbf{each} $l$ in $0$ to $L$ by $B_L$} \Comment{Iterate over blocks of $A$ and $B$}
            \State $a\_block \gets \mA[m : m+B_M, l : l+B_L]$
            \State $b\_block \gets \mB[l : l+B_L, n : n+B_N]$
            \State $acc \gets acc + (a\_block \times b\_block)$
        \EndFor
        \State $\mC[m : m+B_M, n : n+B_N] \gets acc$
    \EndFor
\EndFor
\end{algorithmic}
\end{algorithm}

\subsection{Empirical analysis}
The theoretical analysis presented above incorporates certain assumptions, such as a two-layer memory structure and explicit cache control, that do not fully align with the architecture of real-world hardware. To validate and extend this analysis, we conducted an empirical study of memory access costs during the inference process of typical neural network layers on an Nvidia A40 GPU. The results demonstrate that adopting a dendritic structure can significantly reduce communication costs, in alignment with the predictions of our theoretical framework. Due to the infeasibility of exploring the entire configuration space, this study does not aim to identify the optimal configuration for a specific setting. Instead, it aims to demonstrate clear advantages over well-established baselines that achieve the theoretical lower bound~\cite{tritonlangMatrixMultiplication,smith2018theory,olivry2022automatic}.

We measured the global memory access costs of dendritic neural network layers at three levels of computational complexity. Each baseline configuration used 
$K = 1$ (one dendrite per neuron), corresponding to a standard feedforward neural network with ReLU nonlinearity. The baseline implementation was built using the Triton standard framework~\cite{tritonlangMatrixMultiplication,CUTLASS}, designed to optimize memory access costs while achieving performance comparable to CuBLAS. In this scenario, computation is represented as $C = \sigma(A \times B)$, where 
$A \in \mathbb{R}^{M \times L}$, $B \in \mathbb{R}^{L \times N}$, and $C \in \mathbb{R}^{M \times N}$, with $\sigma$ denoting an element-wise nonlinearity. For all baseline experiments, $M = N = L$ was set to values from $\{1024, 4096, 16384\}$, representing the three complexity levels. 
We compared configurations with $K = 1, 4, 16, 64$ dendrites per neuron at each complexity level, ensuring that computational complexity remained consistent across baselines. For instance, at a complexity level of $M = 1024$ and $K = 4$, we set $M = 1024$, $L = 512$, and $N = 2048$. In this case, the final output matrix would have dimensions $\mathbb{R}^{1024 \times 512}$ after aggregating the outputs from every four dendrites.

We analyze global memory access patterns using Nvidia Nsight Compute and report the optimal results for each tested configuration. The best results were obtained by searching over $B_M, B_N, B_L \in {8, 16, 32, 64, 128}$ (values compatible with the Tensor Core architecture) and $G \in {1, 2, 4, 8, 16, 32, 64, 128, 256}$.

\begin{figure}[htb]
    \centering
    \includegraphics[width=0.75\linewidth]{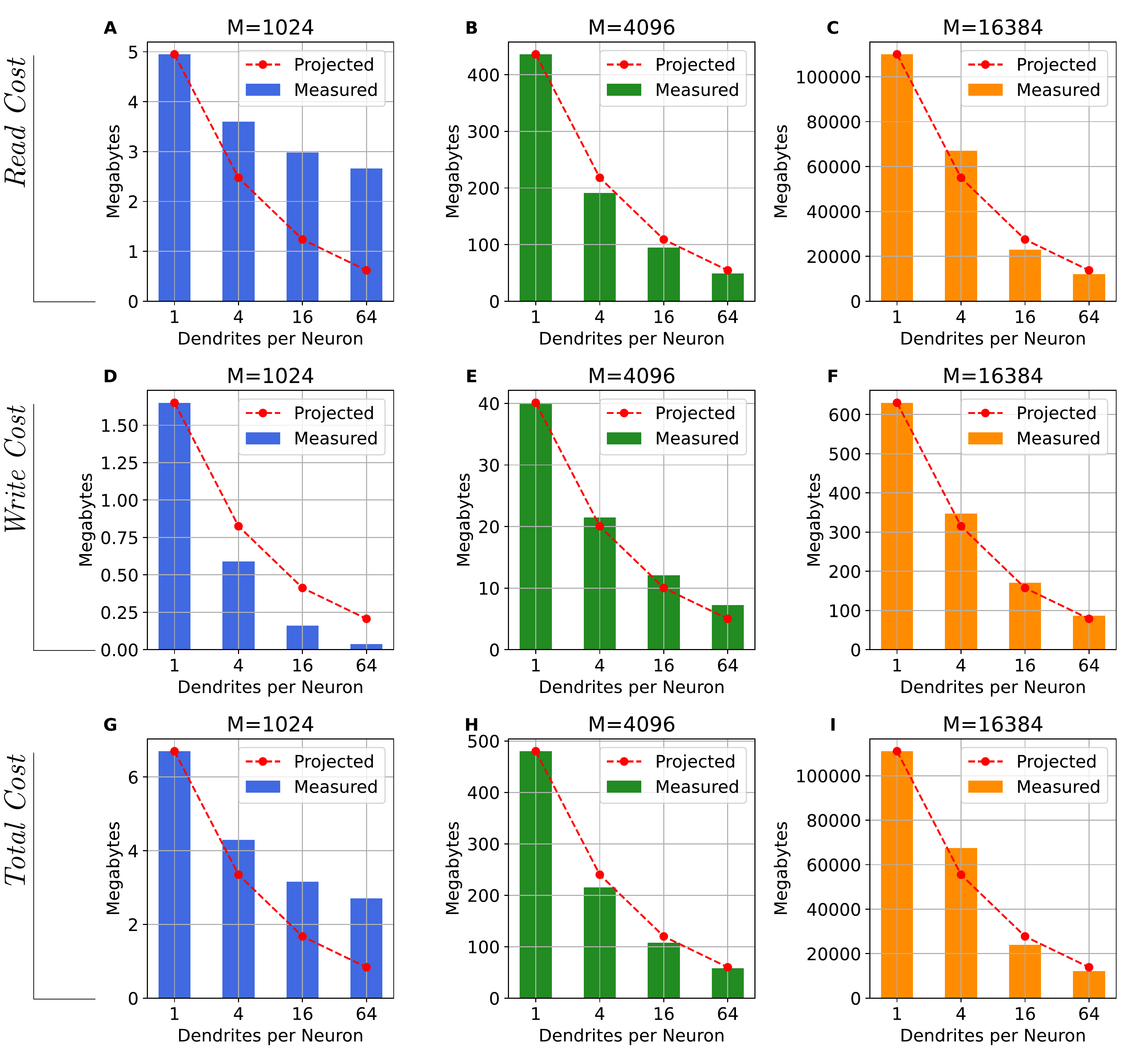}
    \caption{Communication cost analysis across neural network configurations.} \textbf{Top row:} Optimal read costs for networks with varying dendrites per neuron (K = 1, 4, 16, 64) at three complexity levels: M = 1024 (A), M = 4096 (B), and M = 16384 (C). Bar plots show measured costs, while dotted red lines indicate theoretical scaling projections. 
    \textbf{Middle row:} Optimal write costs for the same configurations, showing measured write costs and their theoretical scaling predictions.  \textbf{Bottom row: } Optimal total communication costs, combining both read and write operations for each configuration.     
    \label{fig:rw}
\end{figure}

Our findings indicate that the dendritic structure offers notable advantages. For smaller layers ($M = 1024$, Fig.\ref{fig:rw}A), the operator matrices can fit within the L2 cache, leading to lower performance gains from the dendritic structure compared to theoretical predictions (red dashed lines). However, as the matrix size surpasses the L2 cache capacity ($M = 4096$, Fig.\ref{fig:rw}B), memory access patterns begin to align with the expected $1/\sqrt{K}$ scaling. For reference, the A40 GPU used in these experiments has an L2 cache size of 6,144 KB, as documented in Nvidia Ampere GA102 GPU Architecture white paper. For the largest matrices ($M = 16384$, Fig. 1C), cache evictions lead to memory reads exceeding theoretical predictions. \\

Similar trends are observed for the write costs, as shown in the middle row of Fig. \ref{fig:rw}. Note that, in the case of small output matrices, the result matrices may remain in the L2 cache without being transferred to global memory. This can lead to observed write costs smaller than the size of the output matrix. To show a complete picture we also show the optimal total global memory access cost (read+write) measured across different settings shown at the bottom row of Fig. \ref{fig:rw}. 

The observations outlined above highlight a pathway to significantly reduce model inference energy costs, by reducing global memory access complexity. This approach could also enable GPU designs with lower memory bandwidth requirement. While reducing communication overhead can lower energy consumption, it does not always result in accelerated model inference due to potential bottlenecks in other parts of the inference pipeline. Notably, significant speedups in network layer computation are observed with larger matrix sizes, where memory I/O emerges as the primary bottleneck, as shown in Fig.~\ref{fig:speed}. As a side note, we also observe that configurations with a higher dendritic number per neuron (e.g., K=64) and low computational complexity (e.g., M=1024) tend to result in higher GPU inference time compared to simpler configurations (K=1, M=1024). This is likely due to the need for additional nonlinear activation computations, which must be processed by regular CUDA cores rather than the more efficient Tensor cores. Future architectural improvements 
or low-level code optimizations could address these inefficiencies.\\

\begin{figure}[htb]
    \centering
    \includegraphics[width=0.45\linewidth]{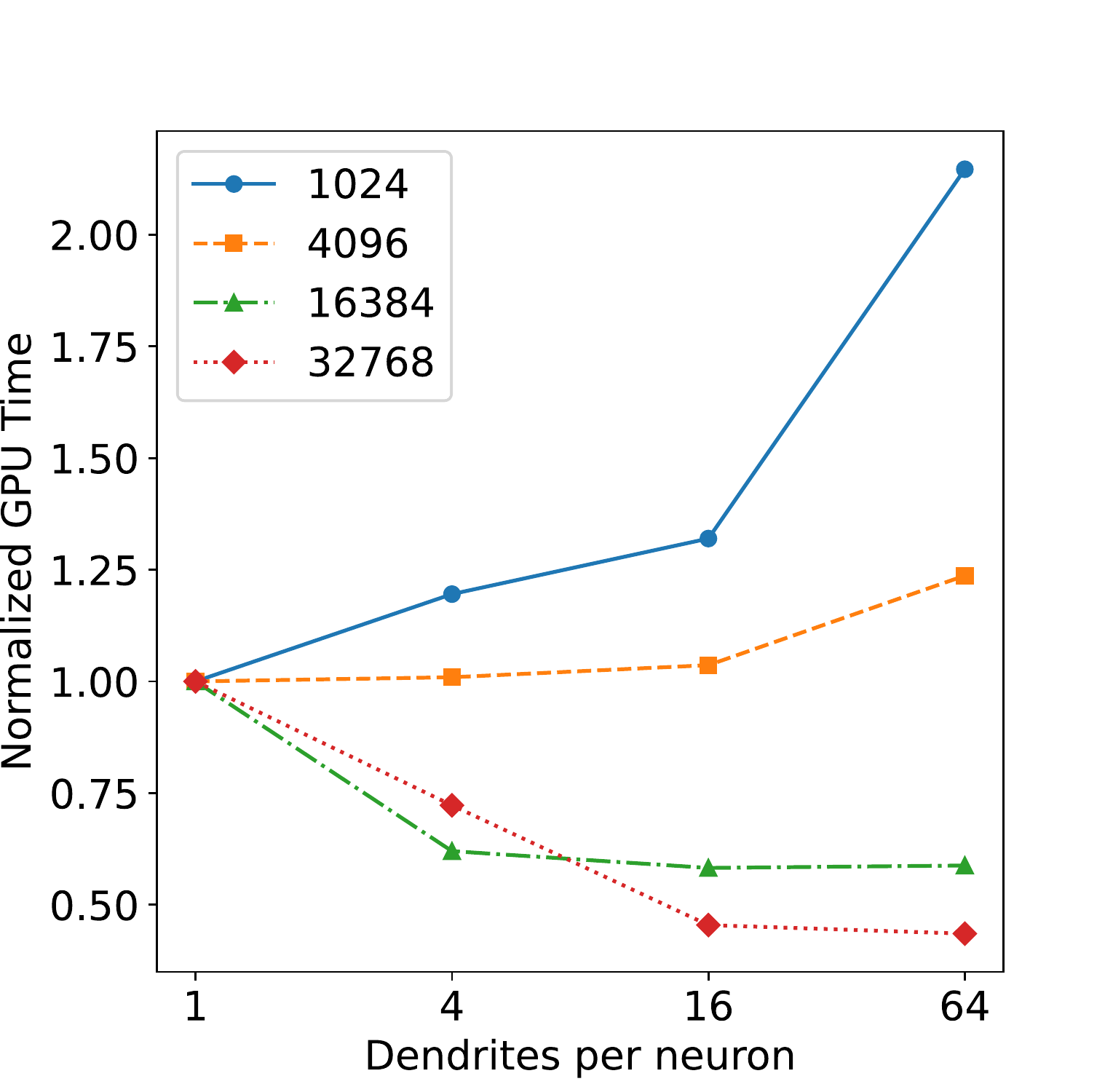}
    \caption{Normalized GPU runtime for different computational complexities, where \( M = N = L \) takes values from \( \{1024, 4096, 16384, 32768\} \), and the number of dendrites per neuron \( K \) varies (with \( K \in \{1, 4, 16, 64\} \)). The GPU runtime is normalized relative to the baseline case for each computational complexity level. }    
    \label{fig:speed}
\end{figure}

\section{Additional machine learning experimental analysis}\label{app:supp_ML}
\subsection{CIFAR-100 dataset with ResNet-18-style models}
We also applied our models to the CIFAR-100 dataset~\cite{krizhevsky2009learning}, which comprises 100 distinct object categories and is commonly employed in machine learning studies. Results are shown in Fig.~\ref{fig:cifar} and resemble our findings using the ImageNet dataset, where incorporating dendrites into a model with a fixed inter-layer communication budget consistently yields improved performance. Furthermore, dendritic-based models surpass point-neuron models with the same computational budget, provided that the inter-layer communication budget is above a certain threshold. As in the ImageNet dataset, we used the ResNet-18 model as the baseline architecture.

\begin{figure}[htb]
    \centering
    \includegraphics[width=0.99\linewidth]{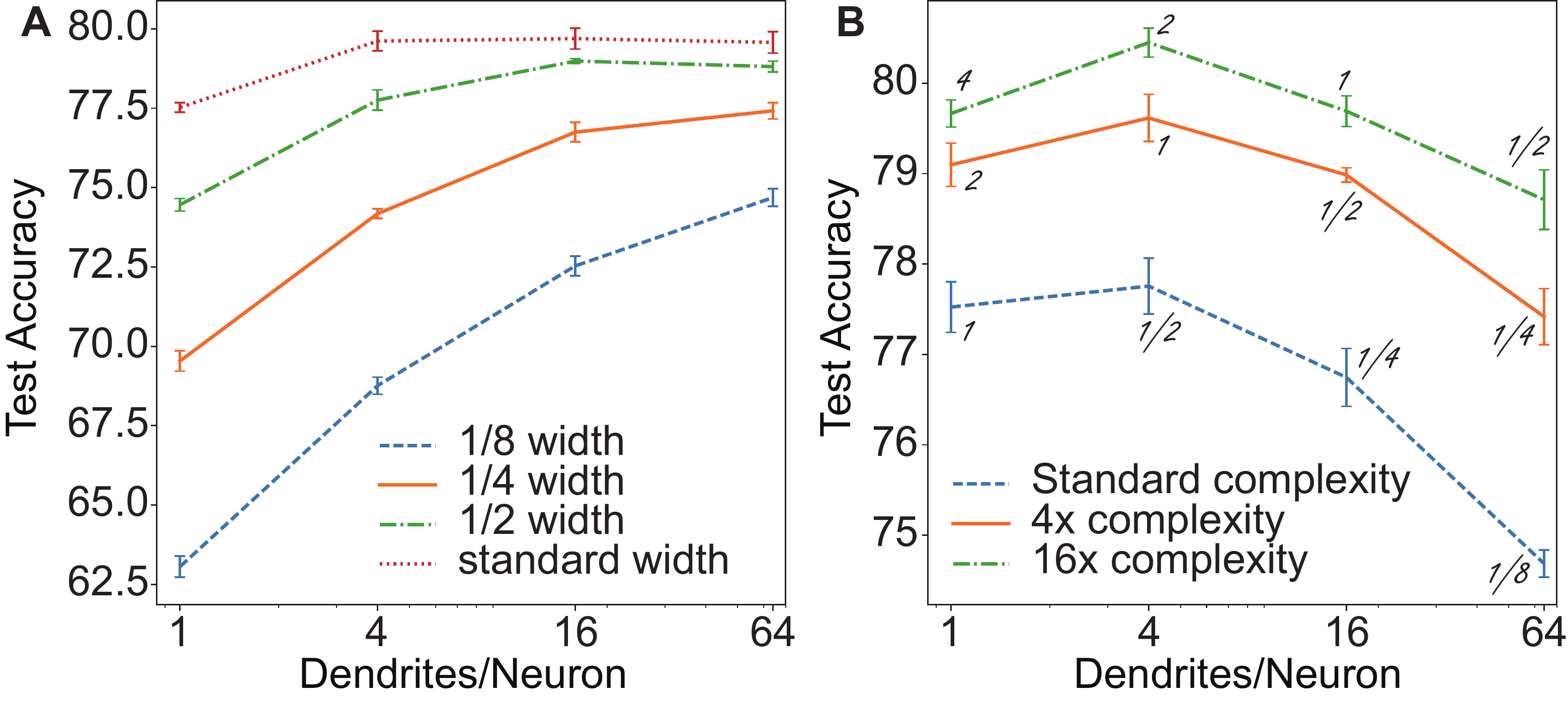}
    \centering
    \caption{Results on the CIFAR-100 dataset. Each experiment was performed $5$ times, with standard deviations displayed. (A) Test accuracy for models with varying numbers of dendrites per neuron at four distinct levels of network width. (B) Comparison of models with equivalent computational complexities at three different levels. The blue dashed curves represent the baseline ResNet-18 model and subsequent dendritic models with $K$ values of $4$, $16$, and $64$. The orange curve corresponds to models with twice the number of neurons (channels), and the green dashed curve represents models with four times the number of channels. The channel scale factors relative to the standard model are labeled on the curves in (B).
   }
    \label{fig:cifar}
\end{figure}
\subsection*{Additional results on Imagenet dataset experiment}
In the results section, we compare the performance of models when they are set to be of same computational complexity level. To obtain a full picture, we also compare models of same number of neurons $D$. The result is illustrated in Fig.~\ref{fig:AK}. We can observe
consistent performance improvement when more dendrites are added to the neurons.
\begin{figure}[htb!]
    \centering
    \includegraphics[width=0.95\linewidth]{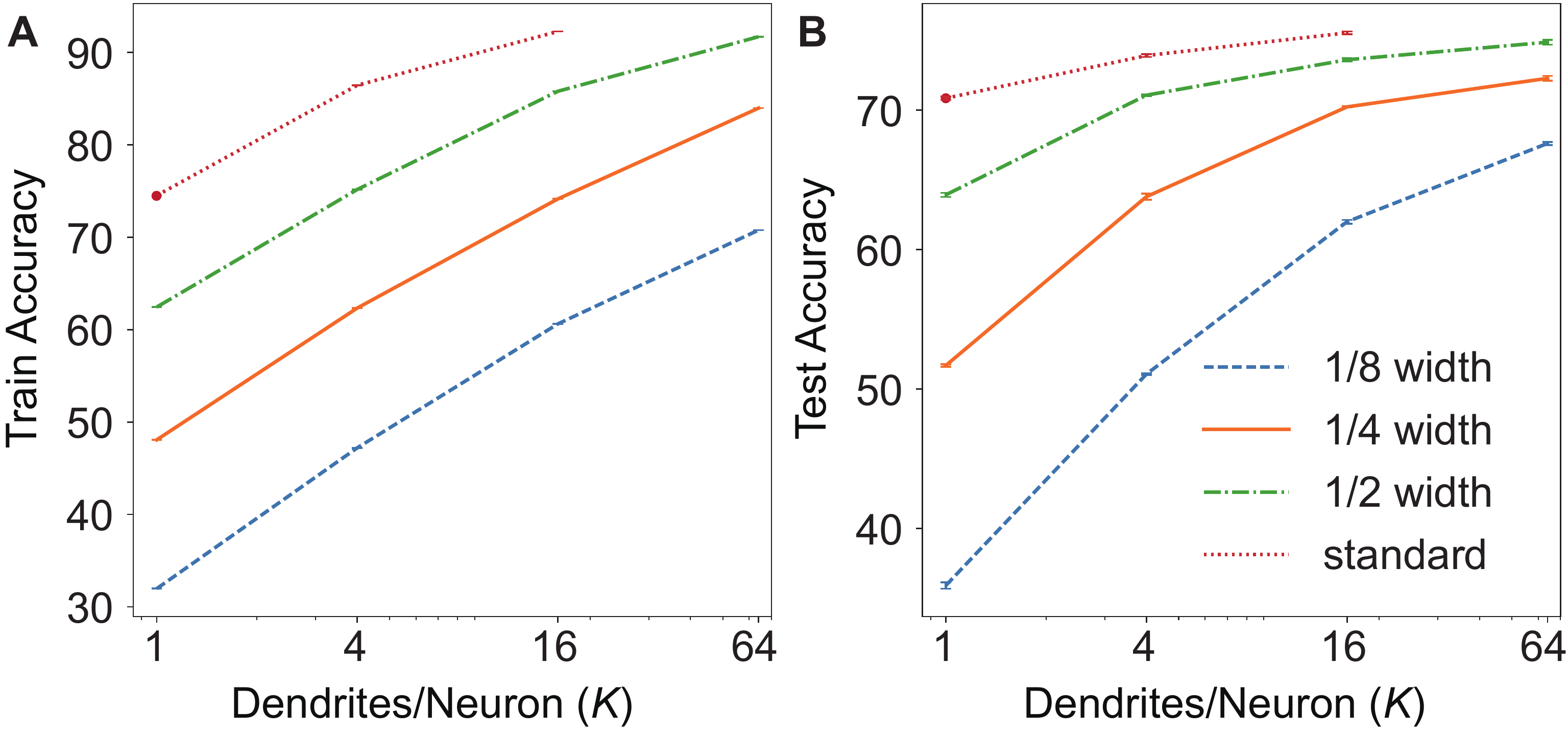}
    \centering
    \caption{Comparison of Resnet-18-style models composed of point and dendritic neurons trained on the ImageNet dataset. Each experiment was performed $5$ times, with standard deviations displayed. (A) Training accuracy, and (B) Test accuracy for models with varying numbers of dendrites per neuron at four distinct levels of network width. $x$-axis indicates the number of dendrites per neuron; models with one dendrite per neuron are point neuron-based models.
    \newline
   }
    \label{fig:AK}
\end{figure}

\subsection*{Non-Residual Convolutional Neural Network Performance on the ImageNet Dataset}
\label{app:non-res}

To ensure the robustness of our findings, we also used a convolutional neural network (CNN) model devoid of residual connections~\cite{he2016deep}. The base model for this experiment was a modified version of the original ResNet-18 network, from which we eliminated the residual connections. The original ResNet-18 model consists of four stages, each featuring two residual blocks. We removed one residual block from both the second and third stages to reduce computing costs. Fig.~\ref{fig:img_nores} illustrates the results from this modified, non-residual network, which are consistent with our original findings shown in Fig.~\ref{fig:imgnet}.

\begin{figure}[htb!]
    \centering
    \includegraphics[width=0.95\linewidth]{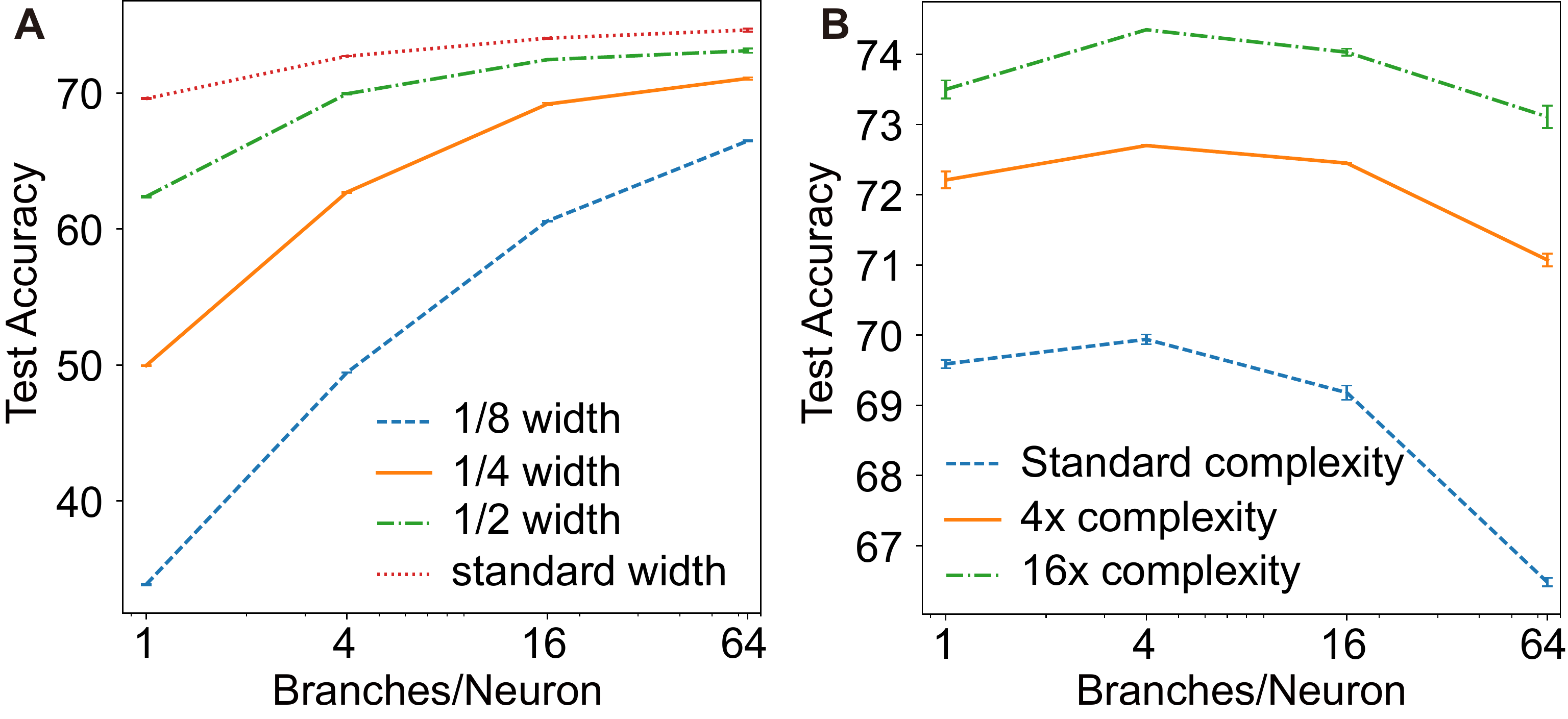}
    \centering
    \caption{Results on ImageNet dataset using neural network models without residual connections. Each experiment are performed $3$ times, with standard deviations displayed. (A) Test accuracy for models with varying numbers of dendrites per neuron at four distinct levels of network width. (B) Comparison of models with equivalent computational complexities at three different levels. The blue dashed curve represents the baseline and subsequent dendritic models with $K$ values of $4$, $16$, and $64$. The orange curve corresponds to models with twice the number of channels, and the green dashed curve represents four times the number of channels.
   }
    \label{fig:img_nores}
\end{figure}

\subsection*{Transformer model}
\label{app:deit}

This section investigates the impact of replacing the feedforward block within transformer-based neural networks. The specific feedforward block in question comprises a classic bottleneck architecture, as illustrated in Fig.~\ref{fig:bottle_neck}.

A bottleneck structure enhances the expressive capacity of a network module by expanding the number of channels in the middle layer. Conventionally, if the module input consists of $L$ channels, the middle layer is expanded to comprise $sL$ channels. Small integer values are commonly employed for $s$ in typical transformer-based models, with common choices including 2, 3, or 4. Subsequently, the module's output is reduced back to the original $L$ channels.

This bottleneck module confers greater expressivity power to the model than a standard two-layer network of $L$ channels while maintaining a modest input/output channel number for the module. This is similar to what dendritic structures try to achieve.

However, the bottleneck structure has an expanded middle layer, necessitating high communication bandwidth. Thus, the question arises: can a dendritic structure supplant the bottleneck structure while conferring additional benefits?

The naive substitution of a bottleneck structure with two dendritic layers is ineffective because the second layer comprises linear neurons. The pooling of linear neuron outputs does not confer inherent advantages to a nonlinear dendritic structure. Consequently, our design only employs a dendritic structure exclusively for the first layer of the block while retaining a linear layer for the second.

More precisely, for a bottleneck structure accepting an input dimension of $L$ and an expansion ratio of $s$, the corresponding first layer is assigned the dendritic branches equal to $2s-1$. This configuration maintains the input channel number for both layers at $L$, preserving the computational and parametric complexity at levels comparable to the original model.

An empirical examination involving a compact transformer model, as proposed by Hassani et al.~\cite{hassani2021escaping}, demonstrates that this modification incurs only a marginal performance decline. Specifically, test accuracy on the ImageNet dataset decreased from 80.9\% to 80.6\%, a negligible reduction considering the substantial decrease in peak activation output I/O within the block threefold less than before.

Considering the highly tuned nature of the transformer architecture, we posit that additional refinements to the model—particularly adjustments favoring the dendritic structure may unlock further potential for performance enhancement.

\subsection*{Speech recognition task}\label{app:speech}
In addition, we substantiate our theory with a speech recognition task. We employ models trained on the LibriSpeech dataset, which consists of approximately 1,000 hours of spoken English~\cite{panayotov2015librispeech2}. Owing to computing resource constraints, we utilize the train-clean-100 and train-clean-360 subsets for model training and the dev-clean subset for model evaluation.
The models used in this portion of the experiment are derived from the Jasper model~\cite{li2019jasper}, a 1D convolutional neural network. To lessen the computational burden during model training, we modified the original model by eliminating the dense residual connections and significantly reducing the number of blocks in the model to arrive at a baseline point neuron based model. Further details regarding the modifications to the models can be found in the accompanying code.

For this part, we carry out two distinct sets of experiments. The first set focuses on models of equivalent computational complexity, and the second emphasizes models sharing the same inter-layer communication cost. 

In the first set of experiments, we evaluated models of two distinct computational complexity levels, varying the neuron configurations. Specifically, the configurations encompassed point neurons and dendritic neurons with varying numbers of dendrites. The results for this segment of experiments are displayed in Table \ref{tab:speech1}. Analogous to previous experiments, we observed that models utilizing dendritic neurons were able to achieve comparable performance relative to the point neuron-based models with equivalent computational complexity if they are equipped with efficient inter-layer communication bandwidth.

The second set of experiments is conducted employing models that retain the same inter-layer communication cost. Our experimental procedure begins with a point neuron-based model, which possesses one-fourth of the inter-layer communication complexity compared to the baseline model. This point neuron model is subsequently replaced with dendritic neuron models that contain 4 and 16 dendrites respectively. The corresponding results are systematically presented in Table~\ref{tab:speech2}. Upon analyzing these results, it becomes apparent that the performance of the model progressively enhances as we incorporate neurons with an increased number of dendrites.

\begin{figure}[htb!]
    \centering
    \includegraphics[width=0.9\linewidth]{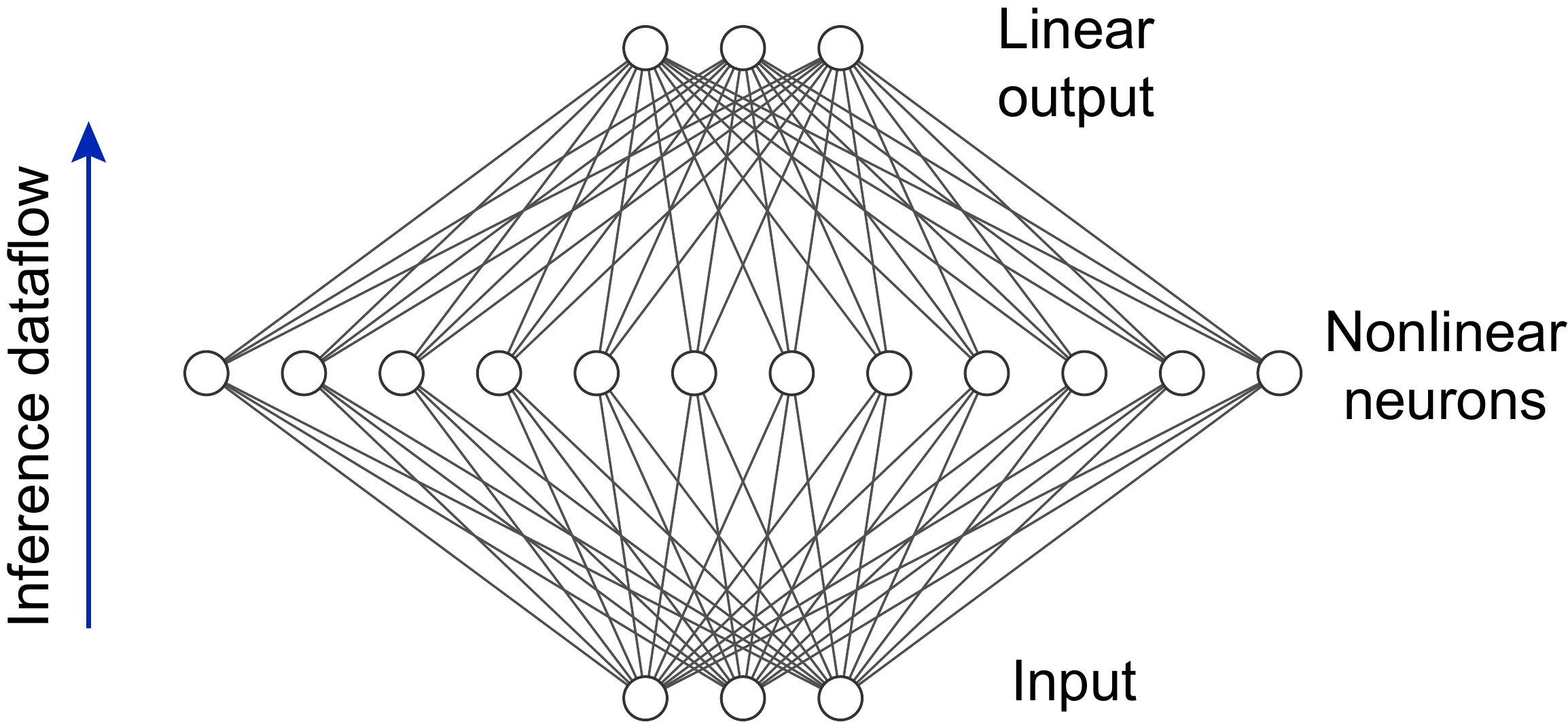}
    \caption{Schematic representation of a bottleneck neural network module comprising two interconnected layers.}
    \label{fig:bottle_neck}
\end{figure}

\begin{table}[htb!]
\centering
\caption{Comparison of the performance of dendritic models with varying numbers of dendrites per neuron on the LibriSpeech dataset. The table presents models with two levels of computational complexity. To maintain equivalent computational complexity when increasing the number of dendrites in a neuron, the number of inter-layer channels is proportionally reduced, as indicated in the table.}
\label{tab:speech1}
\begin{tabular}{|ccc|}
\hline
\multicolumn{1}{|c|}{\textbf{\# of Dendrites}} & \multicolumn{1}{c|}{\textbf{Channel scaling factor}} & \textbf{Test error}       \\ \hline
\multicolumn{3}{|c|}{\textit{1 st complexity level(baseline)}}                                                                    \\ \hline
\multicolumn{1}{|c|}{1 (baseline)}             & \multicolumn{1}{c|}{1}                               & 7.72                      \\ \hline
\multicolumn{1}{|c|}{4}                        & \multicolumn{1}{c|}{1/2}                             & 7.92                      \\ \hline
\multicolumn{1}{|c|}{16}                       & \multicolumn{1}{c|}{1/4}                             & 8.28                      \\ \hline
\multicolumn{3}{|c|}{\textit{2nd Complexity level}}                                                                               \\ \hline
\multicolumn{1}{|c|}{1}                        & \multicolumn{1}{c|}{2}                               & \multicolumn{1}{l|}{6.69} \\ \hline
\multicolumn{1}{|c|}{4}                        & \multicolumn{1}{c|}{1}                               & \multicolumn{1}{l|}{6.69} \\ \hline
\multicolumn{1}{|c|}{16}                       & \multicolumn{1}{c|}{1/2}                             & \multicolumn{1}{l|}{6.89} \\ \hline
\end{tabular}

\end{table}

\begin{table}[htb!]
\centering
\caption{Performance Evaluation of Dendritic Models with varying dendritic counts per neuron evaluated on the LibriSpeech Dataset. The models in this comparison have the same inter-layer communication cost.}
\label{tab:speech2}
\begin{tabular}{|c|c|c|}
\hline
\# of Dendrites & Channel scaling factor &  Test error\\ \hline
1               & 1/4       &     15.39                  \\ \hline
4               & 1/4       &    10.49                 \\ \hline
16              & 1/4       &    8.23                 \\ \hline
\end{tabular}

\end{table}

\end{appendices}

\end{document}